\newcommand{\method}{{\sc Cesp}}
\newcommand{\x}{\mathbf{x}} 
\newcommand{\y}{\mathbf{y}} 
\newcommand{\pmin}{\mathbf{x}} 
\newcommand{\pmax}{\mathbf{y}} 
\newcommand{\z}{\mathbf{z}} 
\renewcommand{\v}{\mathbf{v}} 
\newcommand{\I}{\mathbf{I}} 
\newcommand{\R}{{\mathbb{R}}} 
\newcommand{\K}{{\cal K}} 
\newcommand{\myvec}[2]{\begin{bmatrix}#1 \\ #2 \end{bmatrix}}
\newcommand{\mymatrix}[4]{\begin{bmatrix}#1 & #2 \\ #3 & #4 \end{bmatrix}}
\renewcommand{\H}{\mathrm{H}}
\newcommand{\A}{\mathbf{A}}
\newcommand{\J}{\mathbf{J}}
\renewcommand{\H}{\mathbf{H}} 
\newcommand{\ind}{ \mathds{1}}
\newcommand{\bigo}{{\cal O}}
\newtheorem{theorem}{Theorem}
\newtheorem*{lemma*}{Lemma}
\newtheorem*{corollary*}{Corollary}
\newtheorem*{theorem*}{Theorem}
\newtheorem{lemma}[theorem]{Lemma}
\newtheorem{define}[theorem]{Definition}
\newtheorem{assumption}[theorem]{Assumption}
\newenvironment{lemma_custom_no}[1]
  {\innercustomthm}
  {\endinnercustomthm}
\begin{document}

\runningtitle{Local Saddle Point Optimization: A Curvature Exploitation Approach}
\runningauthor{Adolphs, Daneshmand, Lucchi, Hofmann}

\twocolumn[

\aistatstitle{Local Saddle Point Optimization: A Curvature Exploitation Approach}
\aistatsauthor{Leonard Adolphs \And Hadi Daneshmand \And  Aurelien Lucchi \And Thomas Hofmann}
\aistatsaddress{Department of Computer Science, ETH Zurich}

]

\begin{abstract}
    Gradient-based optimization methods are the most popular choice for finding local optima for classical minimization and saddle point problems. Here, we highlight a systemic issue of gradient dynamics that arise for saddle point problems, namely the presence of undesired stable stationary points that are no local optima. We propose a novel optimization approach that exploits curvature information in order to escape from these undesired stationary points.
    We prove that different optimization methods, including gradient method and {\sc Adagrad}, equipped with curvature exploitation can   escape non-optimal stationary points.
    We also provide empirical results on common saddle point problems which confirm the advantage of using curvature exploitation. 
\end{abstract}


\section{INTRODUCTION}
 We consider the problem of finding a \emph{structured}\footnote{Throughout this work, we aim to find saddles that satisfy a particular (local) min-max structure in the input parameters.} saddle point of a smooth objective, namely solving an optimization problem of the form 
\begin{align} \label{eq:saddle-problem}
\min_{\pmin \in \R^k } \max_{\pmax \in \R^d} f(\pmin,\pmax).
\end{align}
Here, we assume that $f$ is smooth in $\pmin$ and $\pmax$ but \textbf{not} necessarily convex in $\pmin$ or concave in $\pmax$. This particular problem arises in many applications, such as generative adversarial networks (GAN)~\cite{goodfellow14gan}, robust optimization~\cite{ben2009robust}, and game theory~\cite{singh2000nash, Leyton-Brown:2008:EGT:1481632}. Solving the saddle point problem in Eq.~\eqref{eq:saddle-problem} is equivalent to finding a point $(\pmin^*,\pmax^*)$ such that 
\begin{align} \label{eq:global_saddle_point}
f(\pmin^*,\pmax) \leq f(\pmin^*, \pmax^*) \leq f(\pmin,\pmax^*) 
\end{align}
holds for all $\pmin \in \R^k$ and $\pmax \in \R^d$. For a non convex-concave function $f$, finding such a saddle point is computationally infeasible. Instead of finding a \emph{global} saddle point for Eq.~\eqref{eq:saddle-problem}, we aim for a more modest goal: finding a \emph{locally} optimal saddle point, i.e. a point $(\pmin^*, \pmax^*)$ for which the condition in Eq.~\eqref{eq:global_saddle_point} holds true in a local neighbourhood around $(\pmin^*,\pmax^*)$.

There is a rich literature on saddle point optimization for the particular class of convex-concave functions, i.e. when $f$ is convex in $\pmin$ and concave in $\pmax$. Although this type of objective function is commonly encountered in applications such as constrained convex minimization, many saddle point problems of interest do not satisfy the convex-concave assumption. Two popular examples that recently emerged in machine learning
are distributionally robust optimization~\cite{gao2016distributionally, sinha2018certifying}, as well as training generative adversarial networks~\cite{goodfellow14gan}. These applications can be framed as saddle point optimization problems which - due to the complex functional representation of the neural networks used as models - do not fulfill the convexity-concavity condition. 

First-order methods are commonly used to solve problem~\eqref{eq:saddle-problem} as they have a cheap per-iteration cost and are therefore easily scalable. One particular method of choice is simultaneous gradient descent/ascent, which performs the following iterative updates,
\begin{align} \label{eq:saddle_gradient}
(\pmin_{t+1}, \pmax_{t+1}) &= (\pmin_{t}, \pmax_{t}) + \eta_t \left(-\nabla_\pmin f_t,\nabla_\pmax f_t \right) \\ f_t :&= f (\pmin_{t}, \pmax_{t}), \nonumber
\end{align}
where $\eta_t > 0$ is a chosen step size which can, e.g., decrease with time $t$ or be a bounded constant (i.e. $\eta_t = \eta$).
The convergence analysis of the above iterate sequence is typically tied to a strong/strict convexity-concavity property of the objective function defining the dynamics. Under such conditions, the gradient method is guaranteed to converge to a desired saddle point~\cite{arrow1958studies}. These conditions can also be relaxed to some extent, which will be further discussed in Section~\ref{sec:related_work}.

It is known that the gradient method is locally asymptotically stable~\cite{mescheder2017numerics}; but stability alone is not sufficient to guarantee convergence to a locally \emph{optimal} saddle point. Through an example, we will later illustrate that the gradient method is indeed stable at some undesired stationary points, at which the \emph{structural} min-max property~\footnote{This property refers to the function being a local minimum in $\pmin$ and a maximum in $\pmax$.} is not met. This is in clear contrast to minimization problems where all stable stationary points of the gradient dynamics are local minima. The stability of these undesired stationary points is therefore an additional difficulty that one has to consider for escaping from such saddles. 
While a standard trick for escaping saddles in minimization problems consists of adding a small perturbation to the gradient, we will demonstrate that this does not guarantee avoiding undesired stationary points.  

Throughout the paper, we will refer to a \emph{desired} local saddle point as a local minimum in $\pmin$ and maximum in $\pmax$. This characterization implies that the Hessian matrix at $(\pmin,\pmax)$ does not have a negative curvature direction  in $\pmin$ (which corresponds to an eigenvector of $\nabla^2_\pmin f$ with a negative associated eigenvalue) and a positive curvature direction in $\pmax$ (which corresponds to an eigenvector of $\nabla^2_\pmax f$ with a positive associated eigenvalue). In that regard,  curvature information can be used to certify whether the desired min-max structure is met.

In this work, we propose the first saddle point optimization that exploits curvature to guide the gradient trajectory towards the desired saddle points that respect the min-max structure. Since our approach only makes use of the eigenvectors corresponding to the maximum and minimum eigenvalue (rather than the whole eigenspace), we will refer to it as \emph{extreme} curvature exploitation.
We will prove that this type of curvature exploitation avoids convergence to undesired saddles--albeit not guarantees convergence on a general non-convex-concave saddle point problem. Our contribution is linked to the recent research area of stability analysis for gradient-based optimization in general saddle point problems. Nagarajan et al. \cite{nagarajan2017gradient} have shown that the gradient method is stable at locally optimal saddles. Here, we complete the picture by showing that this method is unfavourably stable at some points that are not locally optimal. Our empirical results also confirm the advantage of curvature exploitation in saddle point optimization.

\section{RELATED WORK}
\label{sec:related_work}

\paragraph{Asymptotical Convergence}

In the context of optimizing a Lagrangian, the pioneering works of~\cite{kose1956solutions, arrow1958studies} popularized the use of the primal-dual dynamics to arrive at the saddle points of the objective. The work of~\cite{arrow1958studies} analyzed the stability of this method in continuous time proving global stability results under strict convex-concave assumptions. This result was extended in~\cite{uzawa1958iterative} for a discrete-time version of the subgradient method with a constant step size rule, proving  that the iterates converge to a neighborhood of a saddle point. Results for a decreasing step size were provided in~\cite{golshtein1974generalized, maistroskii1977gradient} while \cite{nemirovskii1978cezare} analyzed an adaptive step size rule with averaged parameters.
The work of~\cite{cherukuri2017saddle} has shown that the conditions of the objective can be relaxed, proving asymptotic stability to the set of saddle points is guaranteed if either the convexity or concavity properties are strict, and convergence is pointwise. They also proved that the strictness assumption can be dropped under other linearity assumptions or assuming strongly joint quasiconvex-quasiconcave saddle functions.

However, for problems where the function considered is not strictly convex-concave, convergence to a saddle point is not guaranteed, with the gradient dynamics leading instead to oscillatory solutions~\cite{holding2014convergence}. These oscillations can be addressed by averaging the iterates~\cite{nemirovskii1978cezare} or using the extragradient method (a perturbed version of the gradient method)~\cite{korpelevich1976extragradient, gidel2018variational}.

There are also instances of saddle point problems that do not satisfy the various conditions required for convergence. A notable example are generative adversarial networks (GANs) for which the work of \cite{nagarajan2017gradient} proved local asymptotic stability under certain suitable conditions on the representational power of the two players (called discriminator and generator). Despite these recent advances, the convergence properties of GANs are still not well understood.

\paragraph{Non-asymptotical Convergence}
An explicit convergence rate for the subgradient method with a constant stepsize was proved in~\cite{nedic2009subgradient} for reaching an approximate saddle point, as opposed to asymptotically exact solutions. Assuming the function is convex-concave, they proved a sub-linear rate of convergence. Rates of convergence have also been derived for the extragradient method~\cite{korpelevich1976extragradient} as well as for mirror descent~\cite{nemirovski2004prox}.

In the context of GANs, \cite{nowozin2016f} showed that a single-step gradient method converges to a saddle point in a neighborhood around the saddle point in which the function is strongly convex-concave.
The work of~\cite{liang2018interaction} studied the theory of non-asymptotic convergence to a local Nash equilibrium. They prove that--assuming local strong convexity-concavity--simultaneous gradient descent achieves an exponential rate of convergence near a stable local Nash equilibrium. They also extended this result to other discrete-time saddle point dynamics such as optimistic mirror descent or predictive methods.

\paragraph{Negative Curvature Exploitation}
The presence of negative curvature in the objective function indicates the existence of a potential descent direction, which is commonly exploited in order to escape saddle points and reach a local minimizer. Among these approaches are trust-region methods that guarantee convergence to a second-order stationary point~\cite{conn2000trust, nesterov2006cubic, cartis2011adaptive}. While a na\"ive implementation of these methods would require the computation and inversion of the Hessian of the objective, this can be avoided by replacing the computation of the Hessian by Hessian-vector products that can be computed efficiently in $\bigo(nd)$~\cite{Pearlmutter94fastexact}. This is applied e.g. using matrix-free Lanczos iterations~\cite{curtis2017exploiting} or online variants such as Oja's algorithm~\cite{allen2017natasha}. Sub-sampling the Hessian can furthermore reduce the dependence on $n$ by using various sampling schemes~\cite{kohler2017sub,xu2017newton}. Finally, ~\cite{allen2017neon2, xu2017first} showed that first-order information can act as a noisy Power method allowing to find a negative curvature direction.

In contrast to these classical results that "blindly" try to escape any type of saddle-point, our aim is to exploit curvature information to reach a specific type of stationary point that satisfies the min-max condition required at the optimum of the objective function.
\section{PRELIMINARIES}

\paragraph{Definition: Locally Optimal Saddles}
  Let us define a $\gamma$-neighbourhood around the point $(\pmin^*,\pmax^*)$ as 
\begin{align} 
 \K^*_\gamma  =\{(\pmin ,\pmax) \big| \: \;  \| \pmin - \pmin^* \| \leq \gamma, \| \pmax - \pmax^* \| \leq \gamma \}
\end{align} 
 with a sufficiently small $\gamma>0$. Throughout the paper, we follow a common approach, see e.g. \cite{mescheder2017numerics, DBLP:journals/corr/NagarajanK17}, for this type of problem and relax the condition of Eq. \eqref{eq:global_saddle_point} to hold only in a local neighbourhood.
\begin{define} \label{def:local_saddle_point}
The point $(\pmin^*,\pmax^*)$ is a locally optimal saddle point of the problem in Eq.~\eqref{eq:saddle-problem} if 
\begin{align} 
f(\pmin^*,\pmax) \leq f(\pmin^*, \pmax^*) \leq f(\pmin,\pmax^*) 
\end{align} 
holds for $\forall (\pmin,\pmax) \in \K^*_\gamma$.
\end{define}

\paragraph{Assumptions}
For the sake of further analysis, we require the function $f$ to be sufficiently smooth, and its second order derivatives with respect to the parameters $\pmin$ and $\pmax$ to be non-degenerate at the optimum $(\pmin^*,\pmax^*)$.

\begin{assumption}[Smoothness]\label{ass:smoothness} 
We assume that $f(\z) := f(\pmin,\pmax)$ is a $C^2$ function, and that its gradient and Hessian are Lipschitz with respect to the parameters $\pmin$ and $\pmax$, i.e. we assume that the following inequalities hold:
\begin{align} \label{assum:Lipschitzness}
\| \nabla f (\z)  - \nabla f(\widetilde{\z}) \| &\leq L_\z \| \z - \widetilde{\z} \| \\
\| \nabla^2 f (\z)  - \nabla^2  f(\widetilde{\z}) \| &\leq \rho_\z  \| \z  - \widetilde{\z} \| \\
\| \nabla_\pmin f (\z)  - \nabla_\pmin f(\widetilde{\z}) \| &\leq L_\pmin \| \z - \widetilde{\z} \| \\
\| \nabla^2_\pmin f (\z)  - \nabla^2_\pmin  f(\widetilde{\z}) \| &\leq \rho_\pmin  \| \z  - \widetilde{\z} \| \\ 
\| \nabla_\pmax f (\z)  - \nabla_\pmax f(\widetilde{\z}) \| &\leq L_\pmax \| \z - \widetilde{\z} \| \\
\| \nabla^2_\pmax f (\z)  - \nabla^2_\pmax  f(\widetilde{\z}) \| &\leq \rho_\pmax  \| \z  - \widetilde{\z} \| 
\end{align}
Moreover, we assume bounded gradients, i.e.
\begin{align}
\| \nabla_\pmin f(\z) \| \leq \ell_\pmin, \; \| \nabla_\pmax f(\z) \| \leq& \ell_\pmax, \; \| \nabla_\z f(\z) \| \leq \ell_\z
\end{align} 

\end{assumption}  
\begin{assumption}[Non-degenerate Hessian at $(\pmin^*,\pmax^*)$] \label{assum:degeneracy}
We assume that the matrices $\nabla^2_\pmin f(\pmin^*,\pmax^*)$ and $\nabla^2_\pmax f(\pmin^*,\pmax^*)$ are non-degenerate for all locally optimal points $(\pmin^*, \pmax^*) \in \R^{k+d}$ as defined in Def. \ref{def:local_saddle_point}. 
\end{assumption}

With the use of Assumption~\ref{assum:degeneracy}, we are able to establish sufficient conditions on $(\pmin^*, \pmax^*)$ to be a locally optimal saddle point.

\begin{lemma} \label{lem:locally-optimal-saddle}
Suppose that $f$ satisfies assumption~\ref{assum:degeneracy}; then,  $\z^* := (\pmin^*,\pmax^*)$ is a locally optimal saddle point on $\K_\gamma^*$ if and only if the gradient with respect to $\z^*$ is zero, i.e. 
\begin{align} 
\nabla f(\pmin^*,\pmax^*) = 0,  
\end{align}
and the second derivative at $(\pmin^*,\pmax^*)$ is positive definite in $\pmin$ and negative definite in $\pmax$ \footnote{In the game theory literature, such point is commonly referred to as local Nash equilibrium, see e.g.~\cite{liang2018interaction}.}, i.e. there exist $\mu_\pmin,\mu_\pmax>0$ such that 
\begin{align} 
\nabla^2_{\pmin} f(\pmin^*,\pmax^*) \succ \mu_\pmin \I , \qquad \nabla^2_{\pmax} f(\pmin^*,\pmax^*) \prec  - \mu_\pmax \I.
\end{align} 

\end{lemma}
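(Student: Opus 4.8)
The plan is to exploit the fact that the local saddle condition of Def.~\ref{def:local_saddle_point} decouples over the two blocks of variables. Indeed, the inequality $f(\pmin^*,\pmax)\le f(\pmin^*,\pmax^*)$ for all $(\pmin^*,\pmax)\in\K_\gamma^*$ says exactly that $\pmax^*$ is a local maximizer of the $C^2$ map $\pmax\mapsto f(\pmin^*,\pmax)$, and $f(\pmin^*,\pmax^*)\le f(\pmin,\pmax^*)$ says that $\pmin^*$ is a local minimizer of $\pmin\mapsto f(\pmin,\pmax^*)$. So the lemma is just the classical first/second-order characterization of a strict local extremum of a smooth function, applied once in $\pmin$ and once in $\pmax$, with Assumption~\ref{assum:degeneracy} supplying the nondegeneracy that upgrades semidefiniteness to strict definiteness.

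For the ``only if'' direction I would first apply the first-order necessary condition to the two one-block problems to obtain $\nabla_\pmin f(\pmin^*,\pmax^*)=0$ and $\nabla_\pmax f(\pmin^*,\pmax^*)=0$, hence $\nabla f(\pmin^*,\pmax^*)=0$. The second-order necessary condition for a local min in $\pmin$ and a local max in $\pmax$ then yields $\nabla^2_\pmin f(\pmin^*,\pmax^*)\succeq 0$ and $\nabla^2_\pmax f(\pmin^*,\pmax^*)\preceq 0$. By Assumption~\ref{assum:degeneracy} neither block is singular, so the spectrum of $\nabla^2_\pmin f(\pmin^*,\pmax^*)$ lies in $(0,\infty)$ and that of $\nabla^2_\pmax f(\pmin^*,\pmax^*)$ in $(-\infty,0)$; taking $\mu_\pmin$ to be its smallest eigenvalue and $\mu_\pmax$ to be minus the largest eigenvalue of the $\pmax$-block gives $\nabla^2_\pmin f(\pmin^*,\pmax^*)\succ\mu_\pmin\I$ and $\nabla^2_\pmax f(\pmin^*,\pmax^*)\prec-\mu_\pmax\I$.

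For the ``if'' direction I would Taylor-expand each one-block map to second order around the stationary point, using the Lipschitz-Hessian bounds of Assumption~\ref{ass:smoothness} (restricted to varying a single block, so the relevant constants are $\rho_\pmin$ and $\rho_\pmax$). For $\Delta := \pmin - \pmin^*$ with $\|\Delta\|\le\gamma$, since $\nabla_\pmin f(\pmin^*,\pmax^*)=0$,
\begin{align}
f(\pmin,\pmax^*) - f(\pmin^*,\pmax^*)
= \tfrac12\, \Delta^\top \nabla^2_\pmin f(\pmin^*,\pmax^*)\,\Delta + R,
\qquad |R| \le \tfrac{\rho_\pmin}{6}\,\|\Delta\|^3 ,
\end{align}
so the quadratic term is at least $\tfrac{\mu_\pmin}{2}\|\Delta\|^2$ and dominates the remainder as soon as $\gamma < 3\mu_\pmin/\rho_\pmin$, giving $f(\pmin,\pmax^*) \ge f(\pmin^*,\pmax^*)$; the symmetric computation in $\pmax$ gives $f(\pmin^*,\pmax) \le f(\pmin^*,\pmax^*)$ once $\gamma < 3\mu_\pmax/\rho_\pmax$. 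Choosing $\gamma$ below both thresholds establishes the saddle inequalities on all of $\K_\gamma^*$.

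The only genuinely delicate point — and the step I would be most careful about — is the ``only if'' direction: local optimality by itself only forces the two Hessian blocks to be semidefinite, and it is precisely Assumption~\ref{assum:degeneracy} that excludes the borderline singular-but-semidefinite case and lets us extract strictly positive margins $\mu_\pmin,\mu_\pmax$. The rest is routine Taylor-remainder bookkeeping, with the quantitative constants of Assumption~\ref{ass:smoothness} entering only to quantify how small $\gamma$ must be.
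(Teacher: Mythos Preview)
Your proof is correct and follows essentially the same strategy as the paper: decouple the saddle condition of Def.~\ref{def:local_saddle_point} into a local-min problem in $\pmin$ and a local-max problem in $\pmax$, then apply second-order optimality analysis block by block. The paper's own argument differs only cosmetically---it re-derives the first- and second-order necessary conditions by explicit contradiction using the cubic Taylor bound from Assumption~\ref{ass:smoothness} rather than invoking the classical statements, and it leaves the sufficiency (``if'') direction and the role of Assumption~\ref{assum:degeneracy} in upgrading semidefiniteness to strict definiteness more implicit than you do. One trivial quibble: taking $\mu_\pmin$ equal to the smallest eigenvalue of $\nabla^2_\pmin f(\z^*)$ only yields $\succeq \mu_\pmin\I$, so to match the strict inequality in the statement you should choose any positive value strictly below it.
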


\section{UNDESIRED STABILITY}

\paragraph{Asymptotic Scenarios}
There are three different asymptotic scenarios for the gradient iterations in Eq.~\eqref{eq:saddle_gradient}: (i) divergence (i.e. $ \lim_{t\to \infty} \|(\pmin_t,\pmax_t)\| \to \infty$), (ii) being trapped in a loop (corresponding to $\lim_{t\to \infty} \| \nabla f \| > 0$), and (iii) convergence to a stationary point of the gradient updates (i.e. $\lim_{t\to \infty} \| \nabla f \| = 0 $). To the best of our knowledge, there is no convergence guarantee for general saddle point optimization. Typical convergence guarantees require convexity-concavity or somewhat relaxed conditions such as quasiconvexity-quasiconcavity of $f$ \cite{cherukuri2017saddle}. This paper focuses on the third outline case and investigates the theoretical guarantees for a convergent series. We will show that gradient-based optimization can converge to some \textit{undesired} stationary points and propose an optimizer that uses extreme curvature information to alleviate this problem. We specifically highlight that we do not provide any convergence guarantee. Rather, we investigate if a convergent sequence is guaranteed to yield a valid solution to the local saddle point problem, i.e., if it always converges to a locally optimal saddle point as defined in Def. \ref{def:local_saddle_point}.


\paragraph{Local Stability}
A stationary point of the gradient iterations can be either stable or unstable. The notion of stability characterizes the behavior of the gradient iterations in a local region around the stationary point. In the neighborhood of a stable stationary point, successive iterations of the method are not able to escape the region. Conversely, we consider a stationary point to be unstable if it is not stable \cite{khalil2002nonlinear}.
The stationary point $\z^* = (\pmin^*,\pmax^*)$ (for which  $\nabla f(\z^*) = 0$ holds) is a locally stable point of the gradient iterations in Eq. \ref{eq:saddle_gradient}, if the Jacobian of its dynamics has only eigenvalues $\lambda_i$ within the unit disk, i.e. 
\begin{align} 
\left| \lambda_i \left( \I + \eta \begin{bmatrix}
-\nabla^2_\pmin f(\z^*) & -\nabla^2_{\pmin,\pmax} f(\z^*) \\ 
\nabla^2_{\pmax,\pmin} f(\z^*) & \nabla^2_{\pmax} f(\z^*)  \end{bmatrix} \right) \right| \leq 1.
\end{align}

\begin{define}[Stable Stationary Point of Gradient Dynamics]
 A point $\z = (\pmin, \pmax)$ is a stable stationary point of the gradient dynamics in Eq. \eqref{eq:saddle_gradient} (for an arbitrarily small step size $\eta > 0$) if $\nabla_\z f = 0$ and if the eigenvalues of the matrix  
 \begin{align}\label{stability-matrix}
     \begin{bmatrix}
-\nabla^2_\pmin f(\z) & -\nabla^2_{\pmin,\pmax} f(\z) \\ 
\nabla^2_{\pmax,\pmin} f(\z) & \nabla^2_{\pmax} f(\z)  \end{bmatrix}
 \end{align}
 only have eigenvalues with negative real-part.
\end{define}

\paragraph{Random Initialization}
In the following, we will use the notion of stability to analyze the asymptotic behavior of the gradient method. We start with a lemma extending known results for general minimization problems that prove that gradient descent with random initialization almost surely converges to a stable stationary point~\cite{lee2016gradient}.
\begin{lemma}[Random Initialization] \label{lemma:random_initialization}
Suppose that assumptions~\ref{ass:smoothness} and \ref{assum:degeneracy} hold. Consider the gradient iterate sequence of Eq.~\eqref{eq:saddle_gradient} with step size $\eta < \min\left( \frac{1}{L_x}, \frac{1}{L_y} , \frac{1}{\sqrt{2}L_z}\right)$ starting from a random initial point. If the sequence converges to a stationary point, then the stationary point is almost surely stable. 
\end{lemma}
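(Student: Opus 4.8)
The plan is to adapt the stable-manifold argument of \cite{lee2016gradient} (developed there for minimization) to the present min-max dynamics. I would first recast the update in \eqref{eq:saddle_gradient} as a fixed-point iteration $\z_{t+1} = g(\z_t)$ with $g(\z) := \z + \eta\,(-\nabla_\pmin f(\z),\,\nabla_\pmax f(\z))$, so that the stationary points of the dynamics are exactly the fixed points of $g$, and $Dg(\z) = \I + \eta\,\M(\z)$, where $\M(\z)$ is the stability matrix of \eqref{stability-matrix}. At a fixed point $\z^*$ every eigenvalue of $Dg(\z^*)$ equals $1 + \eta\lambda$ for some eigenvalue $\lambda$ of $\M(\z^*)$, and $|1+\eta\lambda|^2 = 1 + 2\eta\,\mathrm{Re}(\lambda) + \eta^2|\lambda|^2$; hence $\z^*$ is a stable stationary point in the sense of the definition above exactly when the spectrum of $Dg(\z^*)$ lies strictly inside the unit disk, and it is \emph{strictly unstable} when $Dg(\z^*)$ has an eigenvalue of modulus greater than $1$, equivalently $\M(\z^*)$ has an eigenvalue of positive real part. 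The goal is then to show that the set of initializations from which the iterates converge to a strictly unstable fixed point is Lebesgue-null.

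The first ingredient is that $g$ is a $C^1$ diffeomorphism of $\R^{k+d}$ onto an open set. Injectivity comes from the step size: if $g(\z) = g(\widetilde\z)$, splitting into $\pmin$- and $\pmax$-blocks and using the Lipschitz bounds of Assumption~\ref{ass:smoothness} gives $\|\z - \widetilde\z\|^2 \le \eta^2(L_\pmin^2 + L_\pmax^2)\,\|\z-\widetilde\z\|^2$, forcing $\z=\widetilde\z$. For invertibility of $Dg$, write $\M(\z) = \mathrm{diag}(-\I,\I)\,\nabla^2 f(\z)$ and bound the two stacked blocks by $L_\pmin$ and $L_\pmax$, so that $\|\eta\,\M(\z)\| \le \eta\sqrt{L_\pmin^2 + L_\pmax^2} < 1$ under the stated step size; then every eigenvalue of $Dg(\z) = \I + \eta\,\M(\z)$ has positive real part, so $Dg(\z)$ is invertible for all $\z$. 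Hence $g$ is a $C^1$ diffeomorphism onto the open set $g(\R^{k+d})$ with $C^1$, and in particular locally Lipschitz, inverse.

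With this in hand I would apply the Center-Stable Manifold Theorem at each strictly unstable fixed point $\z^*$: it provides a neighbourhood $U_{\z^*}$ and a locally $g$-invariant embedded $C^1$ manifold $W^{cs}(\z^*) \subseteq U_{\z^*}$ of dimension equal to the number of eigenvalues of $Dg(\z^*)$ of modulus at most $1$, hence at most $k+d-1$, so $W^{cs}(\z^*)$ is Lebesgue-null. If the iterates converge to $\z^*$ they eventually remain in $U_{\z^*}$ and therefore, by local invariance, lie on $W^{cs}(\z^*)$; pulling this back through $g$ shows the initialization lies in $\bigcup_{t \ge 0} g^{-t}(W^{cs}(\z^*))$, which is null because $g$ is a local $C^1$ diffeomorphism (so preimages of null sets are null) and a countable union of null sets is null. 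Covering the (possibly uncountable) set of strictly unstable fixed points by countably many such neighbourhoods, which is possible since $\R^{k+d}$ is second countable, I would conclude that the set of initializations whose trajectory converges to some strictly unstable stationary point is a countable union of null sets, hence null. For an initial point drawn from any distribution absolutely continuous with respect to Lebesgue measure this event has probability zero, so almost surely any limit that exists fails to be strictly unstable.

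Two points need care, and I expect the second to be the main obstacle. First, the argument only excludes eigenvalues of $Dg(\z^*)$ strictly outside the unit circle, not the borderline case where $\M(\z^*)$ has an eigenvalue of exactly zero real part; this must be treated separately, for instance by invoking the non-degeneracy Assumption~\ref{assum:degeneracy} at the candidate limit (so that a locally optimal limit is genuinely stable) or by reading ``stable'' as ``not strictly unstable''. Second, the passage from the \emph{local} center-stable manifolds to a \emph{global} measure-zero statement requires verifying the hypotheses of the Center-Stable Manifold Theorem for $g$ and making the countable-cover/uniformity argument precise, so that countably many local manifolds indeed capture every convergent trajectory. The diffeomorphism and spectral bookkeeping are routine once the step-size bound is used as above; the real work is in this global covering step.
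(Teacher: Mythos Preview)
Your proposal is correct and follows essentially the same route as the paper: establish that the update map $g$ is a diffeomorphism and then invoke the stable-manifold argument of \cite{lee2016gradient}. The one substantive difference is that the paper proves $g$ is \emph{globally} surjective onto $\R^{k+d}$ via a proximal-point construction (for each target $\widetilde\pmin$ the function $h(\pmin)=\tfrac12\|\pmin-\widetilde\pmin\|^2-\eta f(\pmin,\pmax)$ is strongly convex when $\eta<1/L_\pmin$, so its unique minimizer inverts the $\pmin$-update, and analogously for $\pmax$), whereas you stop at injectivity plus everywhere-nonsingular Jacobian and hence only get a diffeomorphism onto an open image. Your weaker conclusion is enough for the measure-zero argument you spell out, since preimages of null sets under a local $C^1$ diffeomorphism are null; the paper includes global surjectivity because that is what Theorem~4.1 of \cite{lee2016gradient} literally hypothesizes, and it then cites that theorem directly rather than re-deriving the center-stable manifold bookkeeping you outline. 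Your flagged edge case of purely imaginary eigenvalues of $\M(\z^*)$ is real and is not addressed in the paper's proof either.
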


\paragraph{Undesired Stable Stationary Point}
If all stable stationary points of the gradient dynamics would be locally optimal saddle points, then the result of Lemma~\ref{lemma:random_initialization} guarantees almost sure  convergence to a solution of the saddle point problem in Eq.~\eqref{eq:saddle-problem}. Previous work by \cite{mescheder2017numerics, nagarajan2017gradient} has shown that every locally optimal saddle point is a stable stationary point of the gradient dynamics. While for minimization problems, the set of stable stationary points is the same as the set of local minima, this might not be the case for the problem we consider here. Indeed, the gradient dynamics might introduce additional stable points that are not locally optimal saddle points. We illustrate this claim in the next example.

\paragraph{Example } Consider the following two-dimensional saddle point problem\footnote{To guarantee smoothness, one can restrict the domain of $f$ to a bounded set.}
\begin{align}\label{eq:conv_conv_function}
    \min_{x \in \R} \max_{y\in \R} \left[f(x,y) = 2 x^2 + y^2 + 4xy + \frac{4}{3}y^3 - \frac{1}{4}y^4 \right]
\end{align}
with $x,y \in \R$. 
The critical points of the function, i.e. points for which $\nabla f(x,y) = 0$, are
\begin{align}
    \begin{split}
        \z_0 = (0,0) \qquad \z_1 &= (-2-\sqrt{2},2+\sqrt{2}) \\
    \z_2 &= (-2+\sqrt{2},2-\sqrt{2})
    \end{split}
\end{align}
Evaluating the Hessians at the three critical points gives rise to the following three matrices:
\begin{align}
    \begin{split}
        \H(\z_0) = \mymatrix{4}{4}{4}{2} \qquad \H(\z_1) &= \mymatrix{4}{4}{4}{-4\sqrt{2}}  \\ 
        \H(\z_2) &= \mymatrix{4}{4}{4}{4\sqrt{2}}.
    \end{split}
\end{align}
We see that only $\z_1$ is a locally optimal saddle point, namely that $\nabla_\pmin^2f(\z_1) = 4 > 0$ and $\nabla_\pmax^2f(\z_1) = -4\sqrt{2} < 0$, whereas the two other points are both a local minimum in the parameter $\pmax$, rather than a maximum. However, figure \ref{fig:convconv_example_a} illustrates gradient steps converging to the undesired stationary point $\z_0$ because it is a locally stable point of the dynamics\footnote{This can be easily shown by observing that the real-part of the eigenvalues of the matrix in Eq. \ref{stability-matrix}, evaluated at $\z_0$, are all negative.}. Hence, even small perturbations of the gradients in each step can not avoid convergence to this point (see Figure \ref{fig:convconv_example_stochastic}). 

\begin{figure*}
    \centering
    \begin{subfigure}[t]{0.4\textwidth}
        \centering
        \includegraphics[width=1\textwidth]{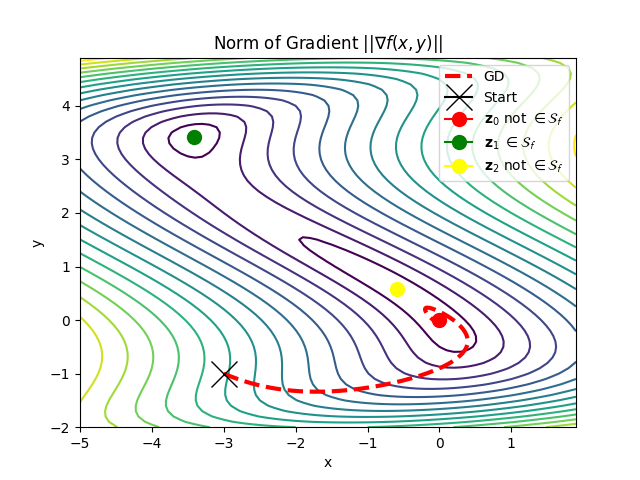}
        \caption{The contour lines correspond to the norm of the gradient $\| \nabla f(x,y) \|$.}
        \label{fig:convconv_example_a}
    \end{subfigure}%
    ~ 
    \begin{subfigure}[t]{.4\textwidth}
        \centering
        \includegraphics[width=1\textwidth]{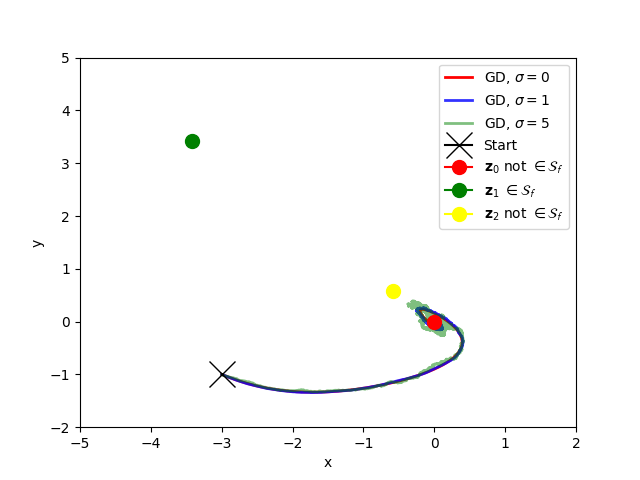}
        \caption{Optimization trajectory when adding Gaussian noise from $\mathcal{N}(0, \sigma)$ to the gradient in every step.}
        \label{fig:convconv_example_stochastic}
    \end{subfigure}
    \caption{Optimization trajectory of the gradient method on the function $f(x,y)$ in Eq.~\eqref{eq:conv_conv_function} with a step size of $\eta = 0.001$. The method converges to the critical point (0,0), even though it is not a locally optimal saddle point, and therefore not a solution to the problem defined in \eqref{eq:conv_conv_function}. We denote the set of locally optimal saddle points of the function $f$ with $\mathcal{S}_f$.}
    \label{fig:convconv_example}
\end{figure*}

\section{EXTREME CURVATURE EXPLOITATION}
The previous example has shown that gradient iterations on the saddle point problem introduce undesired stable points. In this section, we propose a strategy to escape from these points. Our approach is based on exploiting curvature information as in \cite{curtis2017exploiting}.

\paragraph{Extreme Curvature Direction}
Let $\lambda_{\pmin}$ be the minimum eigenvalue of $\nabla^2_\pmin f(\z)$ with its associated eigenvector $\mathbf{v}_\pmin$, and $\lambda_{\pmax}$ be the maximum eigenvalue of $\nabla^2_\pmax f(\z)$ with its associated eigenvector $\mathbf{v}_\pmax$. Then, we define 
\begin{align}\label{eq:def:negative_curvature}
    \v_\z^{(-)} &=  \ind_{\{\lambda_\pmin < 0 \}}\frac{\lambda_{\pmin}}{2\rho_{\pmin}} \text{sgn}(\v_{\pmin}^\top \nabla_{\pmin}f(\z)) \v_{\pmin} \\
    \v_\z^{(+)} &= \ind_{\{\lambda_\pmax > 0 \}} \frac{\lambda_{\pmax}}{2\rho_\pmax} \text{sgn}(\v_{\pmax}^\top \nabla_{\pmax}f(\z))  \v_{\pmax}
\end{align}
where $\text{sgn}: \R \to \{-1, 1\}$ is the sign function.
Using the above vectors, we define $\v_\z:=(\v_\z^{(-)},\v_\z^{(+)})$  as the \emph{extreme} curvature direction at $\z$. 

\paragraph{Algorithm} Using the extreme curvature direction, we modify the gradient steps as follows:
\begin{align}\label{eq:cuvature_gradient_iterates}
    (\pmin_{t+1},\pmax_{t+1}) &= (\pmin_t,\pmax_t) + \v_{\z_t} + \eta (-\nabla_\pmin f_t , \nabla_\pmax f_t) \\
    f_t :&= f(\pmin_t,\pmax_t). \nonumber
\end{align}
This new update step is constructed by adding the extreme curvature direction to the gradient method of Eq.~\eqref{eq:saddle_gradient}. From now on, we will refer to this modified update as the \method{} (curvature exploitation for the saddle point problem) method. Note that the algorithm reduces to gradient-based optimization in regions where there are only positive eigenvalues in $\pmin$ and negative eigenvalues in $\pmax$ as the extreme curvature vector is zero. This includes the region around any locally optimal saddle point.

\begin{figure*}[h]
    \centering
    \includegraphics[width=.8\linewidth]{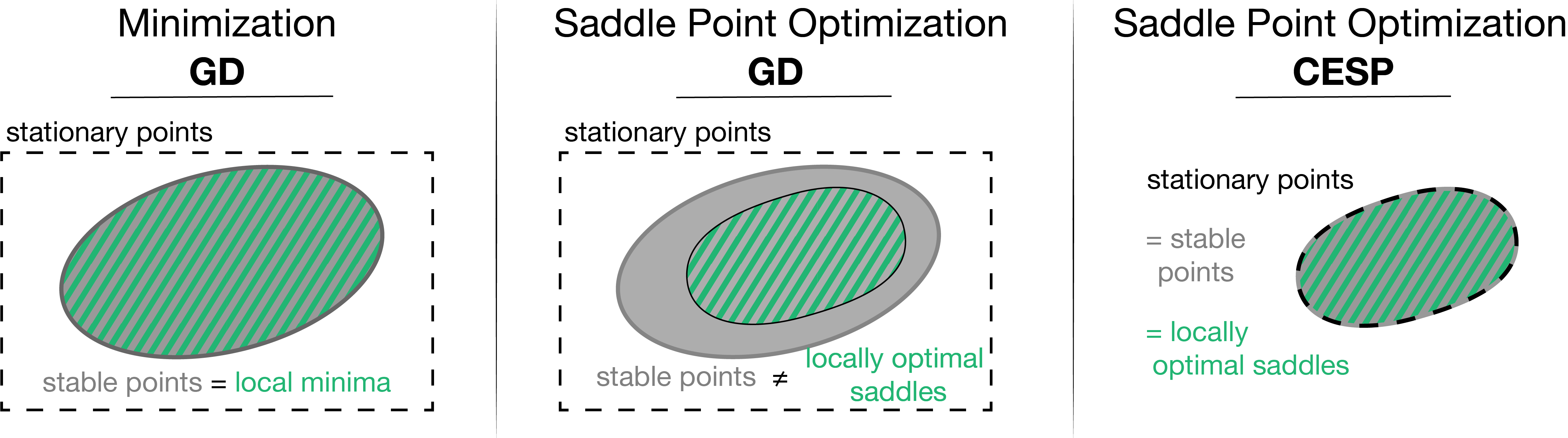}
    \caption{This Venn diagram summarizes the issue of gradient-based optimization that introduces \textit{undesired} stable points. This problem is unique to saddle point optimization and solved by the proposed \method{} method. [Best viewed in color.]}
    \label{fig:venn}
\end{figure*}

\paragraph{Stability}
Extreme curvature exploitation has already been used for escaping from unstable stationary points (i.e. saddle points) of gradient descent for minimization problems~\cite{curtis2017exploiting}. In saddle point problems, curvature exploitation is advantageous not only for escaping \textit{unstable} stationary points but also for escaping undesired \textit{stable} stationary points of the gradient iterates. The upcoming two lemmas prove that the set of stable stationary points of the \method{}{} dynamics and the set of locally optimal saddle points are the same -- therefore, the optimizer only converges to a solution of the local saddle point problem. The issue of gradient based optimization as well as the theoretical guarantees of the \method{} method are visualized in the Venn diagram in figure \ref{fig:venn}: while for minimization problems the set of stable points of gradient descent equals the set of local minima, we see that for saddle point problems gradient-based optimization introduces additional stable points outside of the set of locally optimal solutions. However, by exploiting extreme curvatures with our proposed \method{} method, all points outside of the set of locally optimal saddles become non-stationary. Hence, every convergent sequence of the \method{} method yields a solution to the local saddle point problem.

\begin{lemma} \label{lem:stationary_points_are_optimal}
A point $\z:=(\pmin,\pmax)$ is a stationary point of the iterates in Eq.~\eqref{eq:cuvature_gradient_iterates} if and only if $\z$ is a locally optimal saddle point. 
\end{lemma}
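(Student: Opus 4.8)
The plan is to prove the two directions of the equivalence separately, using the characterization of locally optimal saddle points from Lemma~\ref{lem:locally-optimal-saddle}. Recall that $\z=(\pmin,\pmax)$ is a stationary point of the \method{} iterates in Eq.~\eqref{eq:cuvature_gradient_iterates} precisely when $\v_{\z} + \eta(-\nabla_\pmin f, \nabla_\pmax f) = 0$, and since $\v_{\z} = (\v_\z^{(-)}, \v_\z^{(+)})$ splits along the $\pmin$ and $\pmax$ blocks, this is equivalent to the pair of conditions $\v_\z^{(-)} = \eta \nabla_\pmin f(\z)$ and $\v_\z^{(+)} = -\eta \nabla_\pmax f(\z)$.

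For the ``if'' direction, suppose $\z$ is a locally optimal saddle point. By Lemma~\ref{lem:locally-optimal-saddle} we have $\nabla f(\z) = 0$, so $\nabla_\pmin f(\z) = 0$ and $\nabla_\pmax f(\z) = 0$; moreover $\nabla^2_\pmin f(\z) \succ \mu_\pmin \I \succ 0$ and $\nabla^2_\pmax f(\z) \prec -\mu_\pmax \I \prec 0$. The first of these means the minimum eigenvalue $\lambda_\pmin$ of $\nabla^2_\pmin f(\z)$ is positive, so the indicator $\ind_{\{\lambda_\pmin < 0\}}$ vanishes and $\v_\z^{(-)} = 0$; similarly the maximum eigenvalue $\lambda_\pmax$ of $\nabla^2_\pmax f(\z)$ is negative, so $\v_\z^{(+)} = 0$. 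Hence $\v_\z + \eta(-\nabla_\pmin f, \nabla_\pmax f) = 0$, and $\z$ is stationary for \method{}.

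For the ``only if'' direction, suppose $\z$ is a \method{} stationary point, i.e. $\v_\z^{(-)} = \eta \nabla_\pmin f(\z)$ and $\v_\z^{(+)} = -\eta\nabla_\pmax f(\z)$. I will first argue $\nabla_\pmin f(\z) = 0$. Take the inner product of the first equation with $\v_{\pmin}$, the unit eigenvector for $\lambda_\pmin$. If $\lambda_\pmin < 0$, the left side is $\frac{\lambda_\pmin}{2\rho_\pmin}\,\mathrm{sgn}(\v_\pmin^\top \nabla_\pmin f(\z))\,(\v_\pmin^\top\v_\pmin) = \frac{\lambda_\pmin}{2\rho_\pmin}|\v_\pmin^\top\nabla_\pmin f(\z)|$, which is $\le 0$, while the right side is $\eta\,\v_\pmin^\top\nabla_\pmin f(\z)$; since $\v_\z^{(-)}$ is a nonnegative multiple of $\mathrm{sgn}(\v_\pmin^\top\nabla_\pmin f)$ times $\v_\pmin$, consistency forces $\v_\pmin^\top\nabla_\pmin f(\z) = 0$ and hence $\v_\z^{(-)} = 0$, so $\nabla_\pmin f(\z) = 0$ by the stationarity equation. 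If instead $\lambda_\pmin \ge 0$, then $\v_\z^{(-)} = 0$ directly, again giving $\nabla_\pmin f(\z) = 0$. The same reasoning applied to $\v_\z^{(+)}$ (with the sign of $\lambda_\pmax$ reversed) gives $\nabla_\pmax f(\z) = 0$, so $\nabla f(\z) = 0$ and $\v_\z = 0$. It remains to establish the definiteness conditions. From $\v_\z^{(-)} = 0$ together with $\nabla f(\z) = 0$, the only way $\v_\z^{(-)}$ can vanish is if the indicator is zero (since when $\lambda_\pmin < 0$ and the sign term is nonzero the vector is nonzero — and if $\v_\pmin^\top\nabla_\pmin f(\z) = 0$ with $\nabla_\pmin f(\z)=0$ already, one must separately rule out $\lambda_\pmin<0$): here I invoke Assumption~\ref{assum:degeneracy}, non-degeneracy of $\nabla^2_\pmin f(\z)$ and $\nabla^2_\pmax f(\z)$ at locally optimal points, to exclude zero eigenvalues, so $\lambda_\pmin \ge 0$ forces $\lambda_\pmin > 0$, i.e. $\nabla^2_\pmin f(\z) \succ 0$; symmetrically $\nabla^2_\pmax f(\z) \prec 0$. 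Then Lemma~\ref{lem:locally-optimal-saddle} concludes that $\z$ is a locally optimal saddle point.

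The main subtlety — and the step I would be most careful with — is the ``only if'' direction when $\nabla_\pmin f(\z)$ (resp. $\nabla_\pmax f(\z)$) is zero but one still needs to conclude the Hessian block is \emph{definite} rather than merely not-negative-definite: the definition of $\v_\z^{(-)}$ makes it vanish whenever $\lambda_\pmin \ge 0$, so a stationary point could a priori have a zero eigenvalue in the $\pmin$ block. This is exactly where Assumption~\ref{assum:degeneracy} is needed, and it should be invoked explicitly; without it the lemma as stated is false. A second minor point to handle cleanly is the sign-consistency argument showing that $\v_\z^{(-)} = \eta\nabla_\pmin f(\z)$ with both vectors parallel to $\v_\pmin$ and pointed by opposite sign conventions forces both to be zero — worth spelling out since $\v_\z^{(-)}$ is defined via $\mathrm{sgn}(\v_\pmin^\top\nabla_\pmin f)$ and $\eta\nabla_\pmin f$ need not be parallel to $\v_\pmin$ unless $\nabla_\pmin f$ is itself an eigenvector, so one should instead project onto $\v_\pmin$ as above and then use that $\v_\z^{(-)}$ lies along $\v_\pmin$ to recover the full gradient.
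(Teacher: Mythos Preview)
Your overall strategy matches the paper's: split into the two blocks, argue that stationarity forces both the gradient and the curvature vector to vanish via a sign contradiction, and then appeal to Lemma~\ref{lem:locally-optimal-saddle}. The ``if'' direction is fine. However, the ``only if'' direction as you wrote it contains a genuine computational slip that breaks the argument.

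You say ``take the inner product of the first equation with $\v_{\pmin}$'' and then write
\[
\frac{\lambda_\pmin}{2\rho_\pmin}\,\mathrm{sgn}\bigl(\v_\pmin^\top \nabla_\pmin f(\z)\bigr)\,(\v_\pmin^\top\v_\pmin)
\;=\;
\frac{\lambda_\pmin}{2\rho_\pmin}\,\bigl|\v_\pmin^\top\nabla_\pmin f(\z)\bigr|.
\]
This equality is false: since $\v_\pmin$ is a unit vector, the left side is $\frac{\lambda_\pmin}{2\rho_\pmin}\,\mathrm{sgn}(\v_\pmin^\top\nabla_\pmin f)$, i.e.\ $\pm\frac{\lambda_\pmin}{2\rho_\pmin}$, not $\frac{\lambda_\pmin}{2\rho_\pmin}|\v_\pmin^\top\nabla_\pmin f|$. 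The formula you wrote is exactly what one gets by taking the inner product with $\nabla_\pmin f$ rather than with $\v_\pmin$ --- and that is precisely what the paper does. Dotting $\v_\z^{(-)} = \eta\nabla_\pmin f(\z)$ with $\nabla_\pmin f(\z)$ gives
\[
\frac{\lambda_\pmin}{2\rho_\pmin}\,\bigl|\v_\pmin^\top\nabla_\pmin f(\z)\bigr|
\;=\;
\eta\,\|\nabla_\pmin f(\z)\|^2,
\]
and now the left side is $\le 0$ (it is $0$ when $\lambda_\pmin\ge 0$) while the right side is $\ge 0$, forcing $\nabla_\pmin f(\z)=0$ directly. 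This is the one-line fix.

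A second, related error: from ``$\v_\pmin^\top\nabla_\pmin f(\z)=0$'' you conclude ``hence $\v_\z^{(-)}=0$''. But with the paper's convention $\mathrm{sgn}:\R\to\{-1,1\}$, one has $\mathrm{sgn}(0)\ne 0$, so if $\lambda_\pmin<0$ then $\v_\z^{(-)}=\frac{\lambda_\pmin}{2\rho_\pmin}\,\mathrm{sgn}(0)\,\v_\pmin\ne 0$ regardless of the gradient. The correct order is: first deduce $\nabla_\pmin f(\z)=0$ (via the inner product with $\nabla_\pmin f$), then the stationarity equation gives $\v_\z^{(-)}=0$, and \emph{then} the nonvanishing of $\mathrm{sgn}$ forces $\lambda_\pmin\ge 0$.

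Your closing remarks about needing non-degeneracy to upgrade $\lambda_\pmin\ge 0$ to $\lambda_\pmin>0$ are well taken; the paper handles this step by immediately invoking Lemma~\ref{lem:locally-optimal-saddle}, which carries Assumption~\ref{assum:degeneracy} as a hypothesis.
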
 

We can conclude from the result of Lemma \ref{lem:stationary_points_are_optimal} that every stationary point of the \method{} dynamics is a locally optimal saddle point. The next Lemma establishes the stability of these points.

\begin{lemma} \label{lem:sp_are_stable_cegd}
Suppose that assumptions~\ref{ass:smoothness} and~\ref{assum:degeneracy} hold. Let $\z^* := (\pmin^*,\pmax^*)$ be a locally optimal saddle point, i.e.  
\begin{align} 
\nabla f(\z) = 0, \; \nabla^2_\pmin f(\z^*) \succeq \mu_\pmin \I, \;  \nabla^2_\pmax f(\z^*) \preceq - \mu_\pmax \I, \\
(\mu_\pmin, \mu_\pmax >0)\nonumber
\end{align}
Then the iterates of Eq.\eqref{eq:cuvature_gradient_iterates} are stable in $\K_\gamma^*$ for
\begin{align} 
\gamma \leq \min \{ \mu_\pmin/(\sqrt{2} \rho_\pmin),\mu_\pmax/(\sqrt{2} \rho_\pmax) \}.
\end{align} 
\end{lemma}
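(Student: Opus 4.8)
The plan is to show that once the iterates of Eq.~\eqref{eq:cuvature_gradient_iterates} enter the neighbourhood $\K_\gamma^*$ with the stated $\gamma$, the extreme curvature direction $\v_\z$ vanishes identically on $\K_\gamma^*$, so the \method{} dynamics coincides with the plain gradient dynamics of Eq.~\eqref{eq:saddle_gradient} there; then I invoke the known fact (from \cite{mescheder2017numerics, nagarajan2017gradient}, also recalled in the excerpt) that a locally optimal saddle point is a stable stationary point of the gradient dynamics. Concretely, the first step is a curvature-continuity argument: by Assumption~\ref{ass:smoothness}, $\nabla^2_\pmin f$ is $\rho_\pmin$-Lipschitz, hence for any $\z \in \K_\gamma^*$ we have $\|\nabla^2_\pmin f(\z) - \nabla^2_\pmin f(\z^*)\| \le \rho_\pmin \|\z - \z^*\| \le \sqrt{2}\,\rho_\pmin \gamma$ (the $\sqrt2$ coming from combining the two coordinate bounds $\|\pmin-\pmin^*\|\le\gamma$, $\|\pmax-\pmax^*\|\le\gamma$ into $\|\z-\z^*\|\le\sqrt2\gamma$). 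Since $\nabla^2_\pmin f(\z^*) \succeq \mu_\pmin \I$, Weyl's inequality gives $\lambda_{\min}(\nabla^2_\pmin f(\z)) \ge \mu_\pmin - \sqrt2\,\rho_\pmin\gamma \ge 0$ whenever $\gamma \le \mu_\pmin/(\sqrt2\,\rho_\pmin)$; symmetrically $\lambda_{\max}(\nabla^2_\pmax f(\z)) \le -\mu_\pmax + \sqrt2\,\rho_\pmax\gamma \le 0$ when $\gamma \le \mu_\pmax/(\sqrt2\,\rho_\pmax)$. Under the stated bound on $\gamma$ both hold, so the indicators $\ind_{\{\lambda_\pmin < 0\}}$ and $\ind_{\{\lambda_\pmax > 0\}}$ in Eq.~\eqref{eq:def:negative_curvature} are zero, i.e.\ $\v_\z = \Zero$ throughout $\K_\gamma^*$.

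With $\v_\z \equiv \Zero$ on $\K_\gamma^*$, the update in Eq.~\eqref{eq:cuvature_gradient_iterates} is exactly the gradient update Eq.~\eqref{eq:saddle_gradient}. The second step is to verify stability in the sense of the Definition of a stable stationary point: the Jacobian of the gradient dynamics at $\z^*$ is $\I + \eta J$ with $J$ the block matrix in Eq.~\eqref{stability-matrix} evaluated at $\z^*$. One shows that $J$ has all eigenvalues with strictly negative real part — this is a short linear-algebra computation using $\nabla^2_\pmin f(\z^*) \succ 0$ and $\nabla^2_\pmax f(\z^*) \prec 0$: if $J w = \lambda w$ with $w = (w_\pmin, w_\pmax)$, then taking the real part of $w^* J w$ and using the skew contribution of the off-diagonal cross terms cancels, leaving $\mathrm{Re}(\lambda)\|w\|^2 = -w_\pmin^* \nabla^2_\pmin f(\z^*) w_\pmin + w_\pmax^* \nabla^2_\pmax f(\z^*) w_\pmax < 0$. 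Hence for $\eta$ small enough the spectrum of $\I + \eta J$ lies inside the unit disk, which by the linearization/Hartman–Grobman-type argument (as used in \cite{mescheder2017numerics}) makes $\z^*$ locally asymptotically stable; since the dynamics on $\K_\gamma^*$ is the gradient dynamics and $\K_\gamma^*$ is a small enough neighbourhood, iterates started in $\K_\gamma^*$ do not escape it, which is the claimed stability.

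I expect the main obstacle to be the second step rather than the first: the curvature-vanishing computation is essentially the same Weyl/Lipschitz estimate that appears in the proof of Lemma~\ref{lem:stationary_points_are_optimal}, so it is routine. The delicate point is making the stability claim precise with the $\K_\gamma^*$ quantifier — one must argue that the nonlinear gradient map actually keeps the trajectory inside $\K_\gamma^*$ (not merely that the linearization is a contraction), which requires either a Lyapunov/energy argument on $\K_\gamma^*$ or appealing to the standard stable-manifold result for maps whose fixed point has spectral radius $<1$; I would lean on the latter, citing the same machinery used in \cite{mescheder2017numerics, nagarajan2017gradient}, and simply note that the $\gamma$ in the statement is exactly the radius on which the curvature term is inactive so that the cited gradient-dynamics stability transfers verbatim.
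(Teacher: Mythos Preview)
Your proposal is correct and follows essentially the same route as the paper: first use the Hessian Lipschitz bound (with the $\sqrt{2}$ factor from $\|\z-\z^*\|\le\sqrt{2}\gamma$) to show $\nabla^2_\pmin f(\z)\succ 0$ and $\nabla^2_\pmax f(\z)\prec 0$ on $\K_\gamma^*$, conclude $\v_\z\equiv\Zero$ there, and then defer to the gradient-dynamics stability result of \cite{mescheder2017numerics,nagarajan2017gradient}. The paper's proof stops at the citation for the second step, whereas you additionally sketch the eigenvalue computation (which is exactly the argument the paper later writes out in the proof of Lemma~\ref{lem:linear_transformed_guarantee}); your extra caution about whether the nonlinear map actually keeps iterates in $\K_\gamma^*$ is not addressed in the paper either, so you are if anything being more careful than needed for a match.
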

\paragraph{Escaping From Undesired Saddles}
Extreme curvature exploitation allows us to escape from undesired saddles. In the next lemma, we show that the optimization trajectory of \method{} stays away from all undesired stationary points of the gradient dynamics.
\begin{lemma} \label{lemma:escaping}
Suppose that $\z^*:= (\pmin^*,\pmax^*)$ is an undesired stationary point of the gradient dynamics, namely
\begin{align} 
\nabla f(\z^*) = 0, \| \v_{\z^*} \| > 0. 
\end{align}
Consider the iterates of Eq.~\eqref{eq:cuvature_gradient_iterates} starting from $\z_0 = (\pmin_0,\pmax_0)$ in a $\gamma$-neighbourhood of $\z^*$. After one step the iterates escape the $\gamma$-neighbourhood of $\z^*$, i.e.
 \begin{align} 
  \| \z_1 - \z^* \| \geq \gamma
 \end{align} 
 for a sufficiently small $\gamma = \bigo( \| \v_{\z^*}\| )$. 
\end{lemma}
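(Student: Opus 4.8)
The plan is to lower-bound $\|\z_1 - \z^*\|$ with a reverse triangle inequality and to show that the extreme-curvature term of the \method{} step dominates the remaining two contributions once $\gamma$ is small relative to $\|\v_{\z^*}\|$. Writing the single step from $\z_0 \in \K^*_\gamma$ as
\begin{align*}
\z_1 - \z^* = (\z_0 - \z^*) + \v_{\z_0} + \eta\,\bigl(-\nabla_{\pmin} f_0,\, \nabla_{\pmax} f_0\bigr),
\end{align*}
I would first record
\begin{align*}
\|\z_1 - \z^*\| \;\geq\; \|\v_{\z_0}\| \;-\; \|\z_0 - \z^*\| \;-\; \eta\,\|\nabla f(\z_0)\|.
\end{align*}
Since $\nabla f(\z^*)=0$, Lipschitzness of the gradient (Assumption~\ref{ass:smoothness}) gives $\|\nabla f(\z_0)\| \leq L_\z\|\z_0-\z^*\| \leq L_\z\gamma$, and $\|\z_0-\z^*\|\leq\gamma$ by hypothesis, so the last two terms are $\bigo(\gamma)$.

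The crux is the lower bound $\|\v_{\z_0}\| \geq \tfrac12\|\v_{\z^*}\|$ for $\gamma$ small enough. The key observation is that $\|\v_{\z}\|$ depends only on the \emph{magnitudes} of the two extreme eigenvalues: the eigenvectors $\v_{\pmin},\v_{\pmax}$ are unit vectors and the sign factors have modulus one, so $\|\v_{\z}^{(-)}\| = \ind_{\{\lambda_{\pmin}(\z)<0\}}\,|\lambda_{\pmin}(\z)|/(2\rho_{\pmin})$ and $\|\v_{\z}^{(+)}\| = \ind_{\{\lambda_{\pmax}(\z)>0\}}\,\lambda_{\pmax}(\z)/(2\rho_{\pmax})$, and these components live in the orthogonal $\pmin$- and $\pmax$-blocks, whence $\|\v_{\z}\|^2 = \|\v_{\z}^{(-)}\|^2 + \|\v_{\z}^{(+)}\|^2$. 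Thus I only need scalar eigenvalue perturbation bounds, not eigenvector stability. By Weyl's inequality together with the Hessian-Lipschitz bounds of Assumption~\ref{ass:smoothness}, $|\lambda_{\pmin}(\z_0)-\lambda_{\pmin}(\z^*)| \leq \rho_{\pmin}\|\z_0-\z^*\| \leq \rho_{\pmin}\gamma$ and similarly for $\lambda_{\pmax}$. The hypothesis $\|\v_{\z^*}\|>0$ means $\lambda_{\pmin}(\z^*)<0$ or $\lambda_{\pmax}(\z^*)>0$; for each such active block, choosing $\gamma \leq |\lambda_{\pmin}(\z^*)|/(2\rho_{\pmin})$ (respectively $\gamma \leq \lambda_{\pmax}(\z^*)/(2\rho_{\pmax})$) preserves the sign of that extreme eigenvalue at $\z_0$ and keeps its magnitude at least half its value at $\z^*$, so $\|\v_{\z_0}^{(\pm)}\| \geq \tfrac12\|\v_{\z^*}^{(\pm)}\|$ on the active block while both sides vanish on an inactive one; squaring and adding gives $\|\v_{\z_0}\| \geq \tfrac12\|\v_{\z^*}\|$.

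Combining the two steps, $\|\z_1-\z^*\| \geq \tfrac12\|\v_{\z^*}\| - (1+\eta L_\z)\gamma$, which is $\geq \gamma$ whenever $\gamma \leq \|\v_{\z^*}\|/\bigl(2(2+\eta L_\z)\bigr)$. Intersecting with the constraints above yields the admissible threshold
\begin{align*}
\gamma \;\leq\; \min\left\{ \frac{|\lambda_{\pmin}(\z^*)|}{2\rho_{\pmin}},\; \frac{\lambda_{\pmax}(\z^*)}{2\rho_{\pmax}},\; \frac{\|\v_{\z^*}\|}{2(2+\eta L_\z)} \right\},
\end{align*}
keeping only the active terms; since $|\lambda_{\pmin}(\z^*)|/(2\rho_{\pmin}) = \|\v_{\z^*}^{(-)}\| \leq \|\v_{\z^*}\|$ and likewise for the middle term, this threshold is $\bigo(\|\v_{\z^*}\|)$ as claimed. (If $\z_0=\z^*$ the conclusion is immediate, since then $\z_1-\z^* = \v_{\z^*}$.)

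The step I expect to be the main obstacle is the lower bound on $\|\v_{\z_0}\|$: one might fear that the extreme eigenvector, hence $\v_{\z_0}$, jumps when the extreme eigenvalue is nearly degenerate, forcing a Davis--Kahan-type argument. The resolution -- and the reason the argument stays elementary -- is that only the \emph{norm} $\|\v_{\z_0}\|$ is needed, and this is a function of eigenvalue magnitudes alone, which are $1$-Lipschitz in the Hessian by Weyl. One should also verify that the Hessian-Lipschitz inequalities of Assumption~\ref{ass:smoothness} are in operator norm (so Weyl applies verbatim) and note that Assumption~\ref{assum:degeneracy} plays no role here, as $\z^*$ is an explicitly non-optimal point.
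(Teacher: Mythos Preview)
Your argument is correct and reaches the same $\bigo(\|\v_{\z^*}\|)$ threshold, but the route differs from the paper's in one substantive way. The paper does not separate the gradient contribution and bound it via $\|\nabla f(\z_0)\|\le L_\z\|\z_0-\z^*\|$; instead it exploits the sign convention in the definition of $\v_\z$ to obtain $\nabla_0^\top \v_0 \ge 0$ (with $\nabla_0:=(-\nabla_\pmin f_0,\nabla_\pmax f_0)$), whence $\|\eta\nabla_0+\v_0\|\ge\|\v_0\|$, and then expands $\|\z_1-\z^*\|^2$ and lower-bounds the cross term by Cauchy--Schwarz. Your reverse-triangle approach is more elementary and, interestingly, shows that the lemma holds \emph{without} using the sign alignment at all---the curvature step alone does the work once the gradient is $O(\gamma)$ by stationarity at $\z^*$. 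The paper's approach, on the other hand, makes visible why the $\text{sgn}$ factor is in the algorithm and would survive even if the gradient were not small (e.g.\ at a non-stationary point with extreme curvature).

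Two minor points of polish. First, under the paper's definition of $\K^*_\gamma$ (each block bounded by $\gamma$), one has $\|\z_0-\z^*\|\le\sqrt{2}\,\gamma$ rather than $\gamma$; this just changes constants. Second, your sentence ``both sides vanish on an inactive one'' overstates slightly: if, say, $\lambda_\pmin(\z^*)\ge 0$ then $\|\v_{\z^*}^{(-)}\|=0$, but $\|\v_{\z_0}^{(-)}\|$ need not vanish. This is harmless for the inequality you need ($\|\v_{\z_0}^{(-)}\|\ge 0=\tfrac12\|\v_{\z^*}^{(-)}\|$), so the conclusion stands; it is only the phrasing that should be adjusted.
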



\paragraph{Implementation with Hessian-vector products}\label{par:Hess_vec_prod}
Since storing and computing the Hessian in high dimensions is very costly, we need to find a way to efficiently extract the extreme curvature direction. The most common approach for obtaining the eigenvector corresponding to the largest absolute eigenvalue, (and the eigenvalue itself) of $\nabla^2_\pmin f(\z)$ is to run power iterations on $\I - \beta \nabla^2_\pmin f(\z)$ as 
\begin{align}
    \v_{t+1} = (\I - \beta \nabla^2_\pmin f(\z)) \v_t
\end{align}
where $\v_0$ is a random vector and $\v_{t+1}$ is normalized after every iteration. The parameter $\beta > 0$ is chosen such that $\I - \beta \nabla^2_\pmin f(\z) \succeq 0$. Since this method only requires implicit Hessian computation through a Hessian-vector product, it can be implemented as efficiently as gradient evaluations \cite{Pearlmutter94fastexact}. The results of \cite{doi:10.1137/0613066} provide a bound on the number of required iterations to extract the extreme curvature: for the case $\lambda_{\min}(\nabla^2_\pmin f(\z)) \leq - \gamma$,  $\frac{1}{\gamma}\log(k/\delta^2) L_x$ iterations suffice to find a vector $\hat{\v}$ such that $\hat{\v}^\top \nabla^2_\pmin f(\z) \hat{\v} \leq -\frac{\gamma}{2}$ with probability $1- \delta$ (cf. \cite{lee2016gradient}).

\paragraph{Comparison to second-order optimization}
We would like to draw the attention of the reader to the fact that the \method{} method only uses \emph{extreme} curvature which makes it conceptually different from second-order Newton-type optimization. Although there is a rich literature on second-order optimization for variational inequalities and convex-concave saddle point problems, to the best of our knowledge, there is neither theoretical nor practical evidence for success of these methods on general smooth saddle point problems. 


\section{CURVATURE EXPLOITATION FOR LINEAR-TRANSFORMED GRADIENT STEPS}
\paragraph{Linear-Transformed Gradient Optimization }
Applying a linear transformation to the gradient updates is commonly used to accelerate optimization for various types of problems. The resulting updates can be written in the general form 
\begin{align} \label{eq:linear_transformated_mapping}
\z_{t+1} = \z_t + \eta \A_{\z_t} (-\nabla_\pmin f_t, \nabla_\pmax f_t),\; f_t = f(\z_t) 
\end{align}
where $\A_{\z} = \mymatrix{\mathcal{A}}{0}{0}{\mathcal{B}}$ is a symmetric, block-diagonal $((k+d)\times(k+d))$-matrix. Different optimization methods use a different linear transformation $\A_{\z}$. Table \ref{tab:upd-matrices} in section \ref{sec:transformed_updates} in the appendix illustrates the choice of $\A_{\z}$ for different optimizers. {\sc Adagrad} \cite{Duchi:EECS-2010-24}, one of the most popular optimization methods in machine learning, belongs to this category. 

\paragraph{Extreme Curvature Exploitation} 
We can adapt \method{} to the linear-transformed variant:
\begin{align} \label{eq:curvatuer_linear_transformated_gradient}
\z_{t+1} = \z_t + \v_{\z_t} + \eta \A_{\z_t} (-\nabla_\pmin f_t, \nabla_\pmax f_t). 
\end{align} 
where we choose the linear transformation matrix $\A_{\z_t}$ to be positive definite. This variant of \method{} is also able to filter out the undesired stable stationary points of the gradient method for the saddle point problem. The following lemma proves that it has the same properties as the non-transformed version.

\begin{lemma}\label{lem:linear_transformed_guarantee}
    The set of locally optimal saddle points as defined in Def. \ref{def:local_saddle_point} and the set of stable points of the linear-transformed \method{} update method in Eq.~\eqref{eq:curvatuer_linear_transformated_gradient} are the same.
\end{lemma}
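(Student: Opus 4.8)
The plan is to prove the set equality by the two inclusions it decomposes into, reusing Lemmas~\ref{lem:stationary_points_are_optimal} and~\ref{lem:sp_are_stable_cegd} but carrying the positive-definite transformation $\A_\z = \mymatrix{\mathcal{A}}{0}{0}{\mathcal{B}}$ (with $\mathcal{A},\mathcal{B}\succ 0$) through the arguments. Since a stable point of an iterative map is in particular a fixed point of that map, it suffices to establish: (a) the stationary points of the update~\eqref{eq:curvatuer_linear_transformated_gradient} are exactly the locally optimal saddle points of Def.~\ref{def:local_saddle_point}; and (b) every locally optimal saddle point is a stable fixed point of~\eqref{eq:curvatuer_linear_transformated_gradient}. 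Together these give $\{\text{stable}\}\subseteq\{\text{stationary}\}=\{\text{locally optimal}\}\subseteq\{\text{stable}\}$, hence equality.

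For (a), the "$\Leftarrow$" direction is immediate: at a locally optimal saddle $\z$, Lemma~\ref{lem:locally-optimal-saddle} gives $\lambda_\pmin>0$ and $\lambda_\pmax<0$, so $\v_\z=0$, and $\nabla f(\z)=0$ makes the right-hand side of~\eqref{eq:curvatuer_linear_transformated_gradient} equal to $\z$. For "$\Rightarrow$", let $\z$ satisfy $\v_\z + \eta\,\A_\z(-\nabla_\pmin f,\nabla_\pmax f)=0$; I will show $\v_\z=0$ and $\nabla f(\z)=0$, after which $\z$ is also a fixed point of the untransformed iterates~\eqref{eq:cuvature_gradient_iterates}, so Lemma~\ref{lem:stationary_points_are_optimal} identifies it as a locally optimal saddle. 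Using block-diagonality, the $\pmin$-block reads $\v_\z^{(-)}=\eta\,\mathcal{A}\,\nabla_\pmin f$. Argue by contradiction that $\lambda_\pmin\ge 0$: if $\lambda_\pmin<0$ then $\v_\z^{(-)}=c\,\v_\pmin$ with $c=\frac{\lambda_\pmin}{2\rho_\pmin}\text{sgn}(\v_\pmin^\top\nabla_\pmin f)$, so $\nabla_\pmin f=\frac{c}{\eta}\mathcal{A}^{-1}\v_\pmin$, and projecting onto $\v_\pmin$ gives $\v_\pmin^\top\nabla_\pmin f=\frac{c}{\eta}\,\v_\pmin^\top\mathcal{A}^{-1}\v_\pmin$ with $\v_\pmin^\top\mathcal{A}^{-1}\v_\pmin>0$; hence $\text{sgn}(\v_\pmin^\top\nabla_\pmin f)=\text{sgn}(c)=\text{sgn}(\lambda_\pmin)\,\text{sgn}(\v_\pmin^\top\nabla_\pmin f)=-\text{sgn}(\v_\pmin^\top\nabla_\pmin f)$, impossible, and the degenerate case $\v_\pmin^\top\nabla_\pmin f=0$ forces $c=0$, i.e. $\lambda_\pmin=0$, again contradicting $\lambda_\pmin<0$. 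So $\lambda_\pmin\ge 0$, hence $\v_\z^{(-)}=0$ and then $\mathcal{A}\,\nabla_\pmin f=0$ with $\mathcal{A}\succ 0$ yields $\nabla_\pmin f=0$; the symmetric argument on the $\pmax$-block gives $\v_\z^{(+)}=0$ and $\nabla_\pmax f=0$.

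For (b), I will use that the extreme curvature direction depends only on $f$ (not on $\A$). As in Lemma~\ref{lem:sp_are_stable_cegd}, for $\gamma\le\min\{\mu_\pmin/(\sqrt2\rho_\pmin),\,\mu_\pmax/(\sqrt2\rho_\pmax)\}$ the $\rho$-Lipschitzness of the Hessians keeps $\nabla^2_\pmin f\succ 0$ and $\nabla^2_\pmax f\prec 0$ on all of $\K_\gamma^*$, so $\v_\z\equiv 0$ there and~\eqref{eq:curvatuer_linear_transformated_gradient} coincides on $\K_\gamma^*$ with the linear-transformed gradient map~\eqref{eq:linear_transformated_mapping}. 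Because $\nabla f(\z^*)=0$, the variation of $\A_\z$ enters only through a term multiplied by the vanishing gradient, so the Jacobian of this map at $\z^*$ is exactly $\I+\eta\,\A_{\z^*}\M$, where $\M$ denotes the matrix in Eq.~\eqref{stability-matrix} evaluated at $\z^*$. Since $f\in C^2$ the off-diagonal blocks of $\M$ are mutual transposes, whence $\M+\M^\top=2\,\mathrm{diag}(-\nabla^2_\pmin f(\z^*),\,\nabla^2_\pmax f(\z^*))\prec 0$ at a locally optimal saddle; as $\A_{\z^*}\succ 0$, the matrix $\A_{\z^*}\M$ is similar to $\A_{\z^*}^{1/2}\M\,\A_{\z^*}^{1/2}$, whose symmetric part $\A_{\z^*}^{1/2}(\M+\M^\top)\A_{\z^*}^{1/2}$ is negative definite by congruence, so both matrices have spectrum in the open left half-plane. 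Hence $\I+\eta\,\A_{\z^*}\M$ has spectral radius below $1$ for $\eta$ sufficiently small, and $\z^*$ is a stable fixed point.

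The step I expect to be the main obstacle is the sign analysis in (a): in the untransformed case the curvature term and the gradient term are collinear and cannot cancel, but inserting $\mathcal{A}$ destroys this collinearity, so the non-cancellation must be recovered indirectly via the projection onto $\v_\pmin$ together with the positivity $\v_\pmin^\top\mathcal{A}^{-1}\v_\pmin>0$, with the degenerate sub-case $\v_\pmin^\top\nabla_\pmin f = 0$ handled separately. Part (b) is comparatively routine once the spectral fact about $\A_{\z^*}\M$ is available.
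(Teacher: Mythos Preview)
Your proposal is correct and follows essentially the same route as the paper. Part~(b) is virtually identical to the paper's argument: the curvature term vanishes near a locally optimal saddle, the Jacobian reduces to $\I+\eta\,\A_{\z^*}\M$, and the similarity $\A_{\z^*}\M \sim \A_{\z^*}^{1/2}\M\,\A_{\z^*}^{1/2}$ together with negative definiteness of $\M+\M^\top$ forces the spectrum into the open left half-plane. For part~(a), the paper disposes of the stationary-point equivalence in one sentence, citing Lemma~\ref{lem:stationary_points_are_optimal} and positive definiteness of $\A_\z$ as a ``direct consequence,'' whereas you spell out the projection-onto-$\v_\pmin$ contradiction explicitly; your treatment here is more careful than the paper's, not different from it.
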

A direct implication of Lemma~\ref{lem:linear_transformed_guarantee} is that we can also use curvature exploitation for {\sc Adagrad}. Later, we will experimentally show the advantage of using curvature exploitation for this method. 

\section{EXPERIMENTS}

\subsection{Escaping From Undesired Stationary Points of the Toy Example} Previously, we saw that for the two dimensional saddle point problem on the function of Eq.~\eqref{eq:conv_conv_function}, gradient iterates may converge to an undesired stationary point that is not locally optimal. As shown in Figure \ref{fig:convconv_example_with_cegd}, \method{} solves this issue. In this example, simultaneous gradient iterates converge to the undesired stationary point $\z_0 = (0,0)$ for many different initialization parameters, whereas our method always converges to the locally optimal saddle point. A plot of the basin of attraction of the two different optimizers on this example is presented in Figure \ref{fig:convconv_example_basin_attraction} in the appendix.

\begin{figure*}[h]
    \vspace{-3mm}
    \centering
    \begin{subfigure}[t]{0.4\textwidth}
        \centering
        \includegraphics[width=1\textwidth]{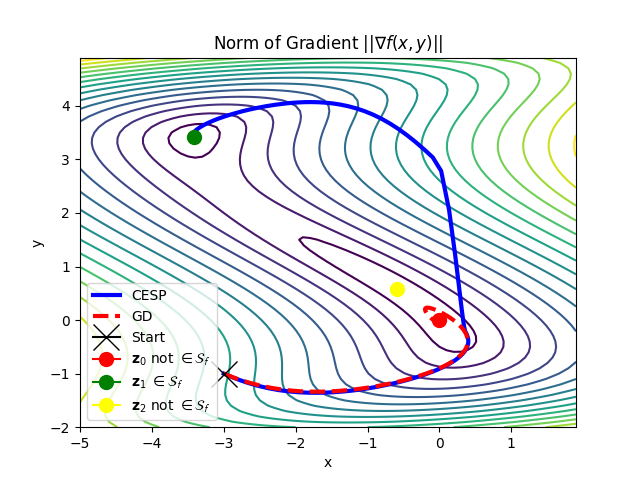}
        \caption{}
    \end{subfigure}%
    ~ 
    \begin{subfigure}[t]{.4\textwidth}
        \centering
        \includegraphics[width=1\textwidth]{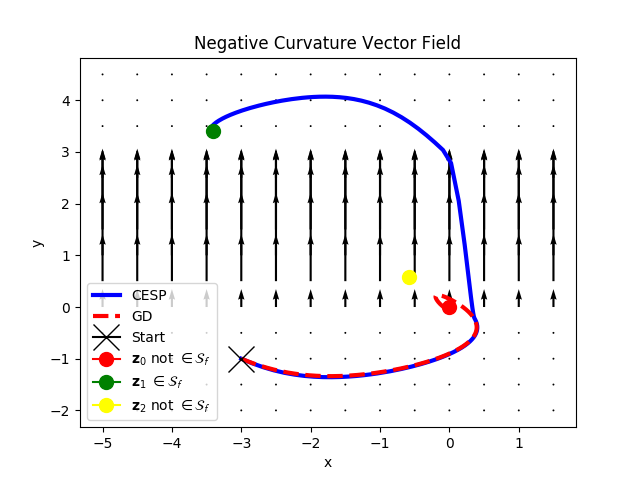}
        \caption{}
    \end{subfigure}
    \caption{Comparison of the trajectory of GD and the \method{} proposal on the function in Eq.~\eqref{eq:conv_conv_function} from the starting point $(-3, -1)$. The background in the right plot shows the vector field of the extreme curvature, as defined in Eq.~\eqref{eq:def:negative_curvature}. Note that for the function in Eq.~\eqref{eq:conv_conv_function} the curvature in the $x$-dimension is constant and positive, and therefore $\v_\z^{(-)}$ is always zero.}
    \label{fig:convconv_example_with_cegd}
\end{figure*}

\subsection{Robust Optimization}
Although robust optimization \cite{ben2009robust} is often formulated as a convex-concave saddle point problem, we consider robust optimization on neural networks that do not fulfill this assumption. The optimization problem that we target here is an application of robust optimization in empirical risk minimization \cite{NamkoongD17}, namely solving
\begin{multline}
    \min_{\pmin} \sup_{P \in \mathcal{P}} \big[ f(\mathbf{X}; \pmin, \mathcal{P}) \\= \left\{ \mathbb{E}_{P}[l(\mathbf{X}; \theta)] : D(P \| \hat{P}_n) \leq \frac{\rho}{n}\right\} \big]
\end{multline}
where $l(X; \pmin)$ denotes the cost function to minimize, $\mathbf{X}$ the data, and $D(P \| \hat{P}_n)$ a divergence measure between the true data distribution $P$ and the empirical data distribution $\hat{P}_n$. \\
We use this framework on the Wisconsin breast cancer data set, which is a binary classification task with 30 attributes and 569 samples, and choose a multilayer perceptron with a non-convex sigmoid activation as the classifier. Due to the relatively small sample size, we can compute the gradient exactly in this case. We choose the objective $f(X; \pmin, \mathcal{P})$ in this setting to be
\begin{align}
    f(X; &\pmin, \mathcal{P})\nonumber \\
    = &-\sum_{i=1}^n p_i^* \left[ y_i \log(\hat{y}(\mathbf{X}_i)) + (1-y_i) \log(1-\hat{y}(\mathbf{X}_i))\right] \nonumber\\
    &- \lambda \sum_{i=1}^n (p^*_i - \frac{1}{n})^2
\end{align} 
where we add a regularization term with $\lambda > 0$ to enforce the divergence constraint. Figure \ref{fig:robust_opt} shows the comparison of the gradient method (GD) and our CESP optimizer on this problem in terms of the minimum eigenvalue of $\nabla^2_\pmin f(\mathbf{X}; \pmin, \mathcal{P})$. Note that $f$ is concave with respect to $\mathcal{P}$ and therefore its Hessian is constant negative. The results indicate the anticipated behavior that \method{} is able to more reliably drive a convergent series towards a solution where the minimum eigenvalue of $\nabla^2_\pmin f(\mathbf{X}; \pmin, \mathcal{P})$ is positive.\\

\begin{figure}
    \vspace{-5mm}
    \centering
    \begin{subfigure}[t]{0.25\textwidth}
        \centering
        \includegraphics[width=1\textwidth]{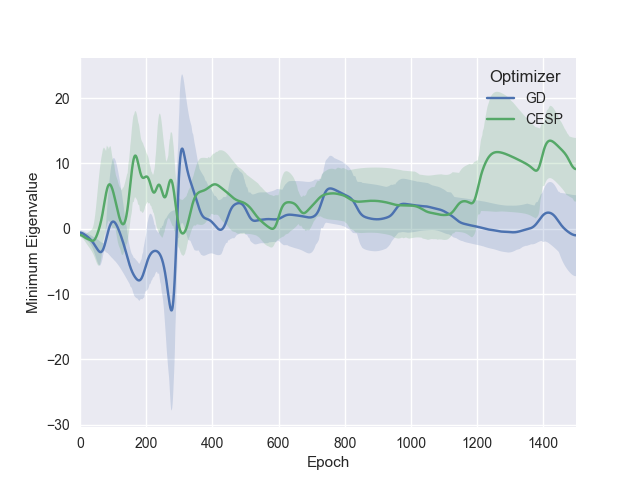}
    \end{subfigure}%
    ~ 
    \begin{subfigure}[t]{.25\textwidth}
        \centering
        \includegraphics[width=1\textwidth]{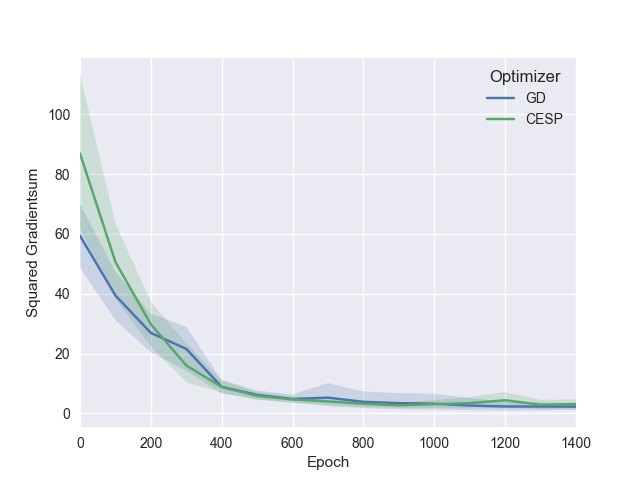}
    \end{subfigure}%
    \caption{The left plot shows the minimum eigenvalue of $\nabla^2_\pmin f(X; \pmin, \mathcal{P})$ and the right plot the squared gradient norm. The solid line shows the mean value whereas the blurred area indicates the 90th percentile. The \method{} optimzer is superior in this example because for convergent series where the gradientnorm $\approx 0$ the minimum eigenvalue of $\nabla^2_\pmin f(X; \pmin, \mathcal{P}) > 0$; for gradient-based optimization this is not the case.}
    \label{fig:robust_opt}
\end{figure}


\vspace{-3mm}
\subsection{Generative Adversarial Networks}
\label{sec:single_layer_GAN-exp}
This experiment evaluates the performance of the \method{} method for training a Generative Adversarial Network (GAN), which reduces to solving the saddle point problem 
\begin{multline}\label{eq:GAN_objective}
    \min_{\pmin} \max_\pmax \big[f(\pmin, \pmax) = \mathbb{E}_{\theta \sim p_d} \log(D_{\pmin}(\theta)) \\+ \mathbb{E}_{\mathbf{z} \sim p_z} \log(1 - D_{\pmin}(G_{\pmax}(\mathbf{z}))) \big]
\end{multline}

where the functions $D: \R^n \to [0, 1]$ and $G: \R^m \to \R^n$ are represented by neural networks parameterized with the variables $\pmin$ and $\pmax$, respectively. We use the MNIST data set and a simple GAN architecture with 1 hidden layer and 100 units. More details about the network architecture and parameters are summarized in table \ref{tab:model_params} in the Appendix.

We investigate the advantage of curvature exploitation for {\sc Adagrad}, which is a member of the class of linear-transformed gradient methods often used for saddle point problems. Moreover, we make use of Power iterations as described in section \ref{par:Hess_vec_prod} to efficiently approximate the extreme curvature vector. Note that since we're using mini batches in this experiment, we do not have access to the correct gradient information but also rely on an approximation here.\\
As before, we evaluate the efficacy of the negative curvature step in terms of the spectrum of $f$ at a (approximately) convergent solution $\z^*$. We compare \method{} to the vanilla {\sc Adagrad} optimizer. Since we are interested in a solution that gives rise to a locally optimal saddle point, we track (an approximation of) the smallest eigenvalue of $\nabla_{\pmin}^2 f(\z^*)$ and the largest eigenvalue of $\nabla_{\pmax}^2 f(\z^*)$ through the optimization. Using these estimates, we can evaluate if a method has converged to a locally optimal saddle point.
The results are shown in figure \ref{fig:GAN_exp}. The decrease in terms of the squared norm of the gradients indicates that both methods converge to a solution. Moreover, both fulfill the condition for a locally optimal saddle point for the parameter $\pmax$, i.e. the maximum eigenvalue of $\nabla_{\pmax}^2 f(\z^*)$ is negative. However, the graph of the minimum eigenvalue of $\nabla_{\pmin}^2 f(\z^*)$ shows that \method{} converges faster, and with less frequent and severe spikes, to a solution where the minimum eigenvalue is zero. Hence, the negative curvature step seems to be able to drive the optimization procedure to regions that yield points closer to a locally optimal saddle point. 

Even though this empirical result highlights the benefits of using curvature exploitation, we observe zero eigenvalues of $\nabla_{\pmin}^2 f(\z^*)$ for convergent solutions, which violates the conditions required for our analysis. This observation is in accordance with recent empirical evidence \cite{sagun2016eigenvalues} showing that the Hessian is actually degenerate for common deep learning architectures. The phenomenon itself as well as approaches to address it are left as future work. One potential direction would be to investigate if high-order derivatives could be used at points where the Hessian is degenerate.

\begin{figure}[h]
    \centering
    \begin{subfigure}[t]{0.25\textwidth}
        \centering
        \includegraphics[width=1\textwidth]{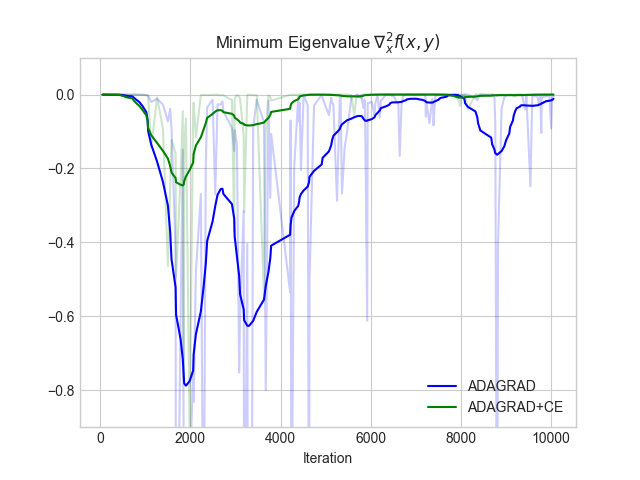}
    \end{subfigure}%
    ~ 
    \begin{subfigure}[t]{.25\textwidth}
        \centering
        \includegraphics[width=1\textwidth]{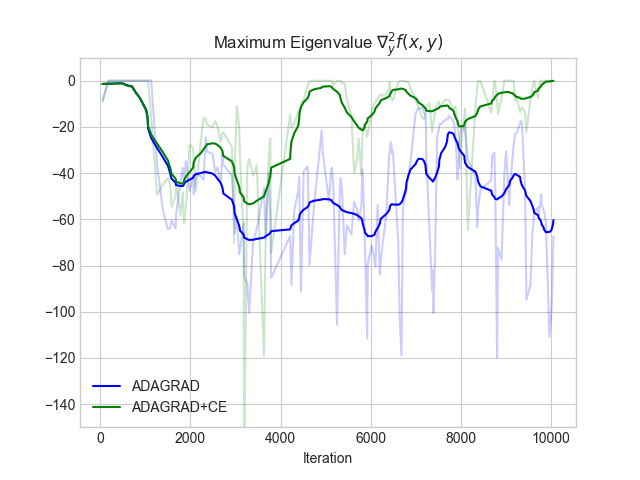}
    \end{subfigure}%
    \\
    \begin{subfigure}[t]{.25\textwidth}
        \centering
        \includegraphics[width=1\textwidth]{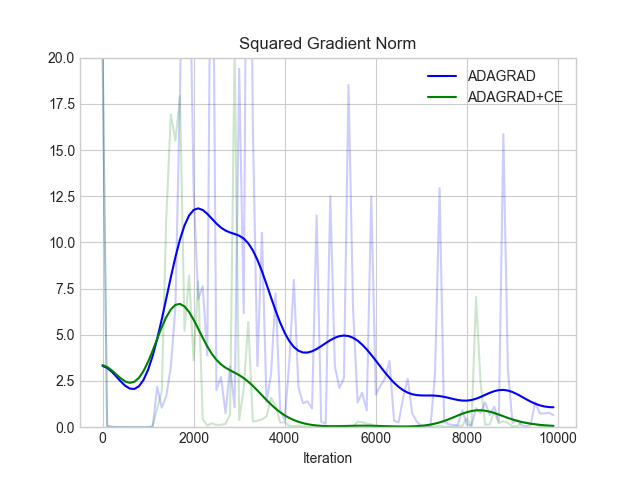}
    \end{subfigure}
    \caption{The first two plots show the minimum eigenvalue of $\nabla_{\pmin}^2 f(\pmin, \pmax)$ and the maximum eigenvalue of $\nabla_{\pmax}^2 f(\pmin, \pmax)$, respectively. The third plot shows $\|\nabla f(\z_t)\|^2$. The transparent graph shows the raw computed values, whereas the solid graph is smoothed with a Gaussian filter.}
    \label{fig:GAN_exp}
\end{figure}



\section{CONCLUSION}
We focused our study on reaching a solution to the local saddle point problem. First, we have shown that gradient methods have stable stationary points that are not locally optimal, which is a problem exclusively arising in saddle point optimization. Second, we proposed a novel approach that exploits extreme curvature information to avoid the undesired stationary points. We believe this work highlights the benefits of using curvature information for saddle point problems and might open the door to other novel algorithms with stronger global convergence guarantees. 

\bibliography{local-saddle-opt}

\begin{thebibliography}{10}

\bibitem{allen2017natasha}
Zeyuan Allen-Zhu.
\newblock Natasha 2: Faster non-convex optimization than sgd.
\newblock {\em arXiv preprint arXiv:1708.08694}, 2017.

\bibitem{allen2017neon2}
Zeyuan Allen-Zhu and Yuanzhi Li.
\newblock Neon2: Finding local minima via first-order oracles.
\newblock {\em arXiv preprint arXiv:1711.06673}, 2017.

\bibitem{arrow1958studies}
Kenneth~Joseph Arrow, Leonid Hurwicz, Hirofumi Uzawa, and Hollis~Burnley
  Chenery.
\newblock Studies in linear and non-linear programming.
\newblock 1958.

\bibitem{ben2009robust}
Aharon Ben-Tal, Laurent El~Ghaoui, and Arkadi Nemirovski.
\newblock {\em Robust optimization}.
\newblock Princeton University Press, 2009.

\bibitem{Benzi05numericalsolution}
Michele Benzi, Gene~H. Golub, and Jörg Liesen.
\newblock Numerical solution of saddle point problems.
\newblock {\em ACTA NUMERICA}, 14:1--137, 2005.

\bibitem{cartis2011adaptive}
Coralia Cartis, Nicholas~IM Gould, and Philippe~L Toint.
\newblock Adaptive cubic regularisation methods for unconstrained optimization.
  part i: motivation, convergence and numerical results.
\newblock {\em Mathematical Programming}, 127(2):245--295, 2011.

\bibitem{cherukuri2017saddle}
Ashish Cherukuri, Bahman Gharesifard, and Jorge Cortes.
\newblock Saddle-point dynamics: conditions for asymptotic stability of saddle
  points.
\newblock {\em SIAM Journal on Control and Optimization}, 55(1):486--511, 2017.

\bibitem{conn2000trust}
Andrew~R Conn, Nicholas~IM Gould, and Philippe~L Toint.
\newblock {\em Trust region methods}.
\newblock SIAM, 2000.

\bibitem{curtis2017exploiting}
Frank~E Curtis and Daniel~P Robinson.
\newblock Exploiting negative curvature in deterministic and stochastic
  optimization.
\newblock {\em arXiv preprint arXiv:1703.00412}, 2017.

\bibitem{NIPS2014_5486}
Yann~N Dauphin, Razvan Pascanu, Caglar Gulcehre, Kyunghyun Cho, Surya Ganguli,
  and Yoshua Bengio.
\newblock Identifying and attacking the saddle point problem in
  high-dimensional non-convex optimization.
\newblock In Z.~Ghahramani, M.~Welling, C.~Cortes, N.~D. Lawrence, and K.~Q.
  Weinberger, editors, {\em Advances in Neural Information Processing Systems
  27}, pages 2933--2941. Curran Associates, Inc., 2014.

\bibitem{Duchi:EECS-2010-24}
John Duchi, Elad Hazan, and Yoram Singer.
\newblock Adaptive subgradient methods for online learning and stochastic
  optimization.
\newblock Technical Report UCB/EECS-2010-24, EECS Department, University of
  California, Berkeley, Mar 2010.

\bibitem{gao2016distributionally}
Rui Gao and Anton~J Kleywegt.
\newblock Distributionally robust stochastic optimization with wasserstein
  distance.
\newblock {\em arXiv preprint arXiv:1604.02199}, 2016.

\bibitem{gidel2018variational}
Gauthier Gidel, Hugo Berard, Pascal Vincent, and Simon Lacoste-Julien.
\newblock A variational inequality perspective on generative adversarial nets.
\newblock {\em arXiv preprint arXiv:1802.10551}, 2018.

\bibitem{golshtein1974generalized}
EG~Golshtein.
\newblock Generalized gradient method for finding saddlepoints.
\newblock {\em Matekon}, 10(3):36--52, 1974.

\bibitem{goodfellow14gan}
Ian Goodfellow, Jean Pouget-Abadie, Mehdi Mirza, Bing Xu, David Warde-Farley,
  Sherjil Ozair, Aaron Courville, and Yoshua Bengio.
\newblock Generative adversarial nets.
\newblock In Z.~Ghahramani, M.~Welling, C.~Cortes, N.~D. Lawrence, and K.~Q.
  Weinberger, editors, {\em Advances in Neural Information Processing Systems
  27}, pages 2672--2680. Curran Associates, Inc., 2014.

\bibitem{holding2014convergence}
Thomas Holding and Ioannis Lestas.
\newblock On the convergence to saddle points of concave-convex functions, the
  gradient method and emergence of oscillations.
\newblock In {\em Decision and Control (CDC), 2014 IEEE 53rd Annual Conference
  on}, pages 1143--1148. IEEE, 2014.

\bibitem{khalil2002nonlinear}
H.K. Khalil.
\newblock {\em Nonlinear Systems}.
\newblock Pearson Education. Prentice Hall, 2002.

\bibitem{kohler2017sub}
Jonas~Moritz Kohler and Aurelien Lucchi.
\newblock Sub-sampled cubic regularization for non-convex optimization.
\newblock In {\em International Conference on Machine Learning}, 2017.

\bibitem{korpelevich1976extragradient}
GM~Korpelevich.
\newblock The extragradient method for finding saddle points and other
  problems.
\newblock {\em Matecon}, 12:747--756, 1976.

\bibitem{kose1956solutions}
T~Kose.
\newblock Solutions of saddle value problems by differential equations.
\newblock {\em Econometrica, Journal of the Econometric Society}, pages 59--70,
  1956.

\bibitem{doi:10.1137/0613066}
J.~Kuczyński and H.~Woźniakowski.
\newblock Estimating the largest eigenvalue by the power and lanczos algorithms
  with a random start.
\newblock {\em SIAM Journal on Matrix Analysis and Applications},
  13(4):1094--1122, 1992.

\bibitem{lee2016gradient}
Jason~D Lee, Max Simchowitz, Michael~I Jordan, and Benjamin Recht.
\newblock Gradient descent converges to minimizers.
\newblock {\em arXiv preprint arXiv:1602.04915}, 2016.

\bibitem{Leyton-Brown:2008:EGT:1481632}
Kevin Leyton-Brown and Yoav Shoham.
\newblock {\em Essentials of Game Theory: A Concise, Multidisciplinary
  Introduction}.
\newblock Morgan and Claypool Publishers, 1st edition, 2008.

\bibitem{liang2018interaction}
Tengyuan Liang and James Stokes.
\newblock Interaction matters: A note on non-asymptotic local convergence of
  generative adversarial networks.
\newblock {\em arXiv preprint arXiv:1802.06132}, 2018.

\bibitem{maistroskii1977gradient}
D~Maistroskii.
\newblock Gradient methods for finding saddle points.
\newblock {\em Matekon}, 14(1):3--22, 1977.

\bibitem{mescheder2017numerics}
Lars Mescheder, Sebastian Nowozin, and Andreas Geiger.
\newblock The numerics of gans.
\newblock {\em arXiv preprint arXiv:1705.10461}, 2017.

\bibitem{nagarajan2017gradient}
Vaishnavh Nagarajan and J~Zico Kolter.
\newblock Gradient descent gan optimization is locally stable.
\newblock In {\em Advances in Neural Information Processing Systems}, pages
  5591--5600, 2017.

\bibitem{DBLP:journals/corr/NagarajanK17}
Vaishnavh Nagarajan and J.~Zico Kolter.
\newblock Gradient descent {GAN} optimization is locally stable.
\newblock {\em CoRR}, abs/1706.04156, 2017.

\bibitem{NamkoongD17}
Hongseok Namkoong and John~C. Duchi.
\newblock Variance-based regularization with convex objectives.
\newblock In {\em Advances in Neural Information Processing Systems 30: Annual
  Conference on Neural Information Processing Systems 2017, 4-9 December 2017,
  Long Beach, CA, {USA}}, pages 2975--2984, 2017.

\bibitem{nedic2009subgradient}
Angelia Nedi{\'c} and Asuman Ozdaglar.
\newblock Subgradient methods for saddle-point problems.
\newblock {\em Journal of optimization theory and applications},
  142(1):205--228, 2009.

\bibitem{nemirovski2004prox}
Arkadi Nemirovski.
\newblock Prox-method with rate of convergence o (1/t) for variational
  inequalities with lipschitz continuous monotone operators and smooth
  convex-concave saddle point problems.
\newblock {\em SIAM Journal on Optimization}, 15(1):229--251, 2004.

\bibitem{nemirovskii1978cezare}
AS~Nemirovskii and DB~Yudin.
\newblock Cezare convergence of gradient method approximation of saddle points
  for convex-concave functions.
\newblock {\em Doklady Akademii Nauk SSSR}, 239:1056--1059, 1978.

\bibitem{nesterov2006cubic}
Yurii Nesterov and Boris~T Polyak.
\newblock Cubic regularization of newton method and its global performance.
\newblock {\em Mathematical Programming}, 108(1):177--205, 2006.

\bibitem{nowozin2016f}
Sebastian Nowozin, Botond Cseke, and Ryota Tomioka.
\newblock f-gan: Training generative neural samplers using variational
  divergence minimization.
\newblock In {\em Advances in Neural Information Processing Systems}, pages
  271--279, 2016.

\bibitem{Pearlmutter94fastexact}
Barak~A. Pearlmutter.
\newblock Fast exact multiplication by the hessian.
\newblock {\em Neural Computation}, 6:147--160, 1994.

\bibitem{sagun2016eigenvalues}
Levent Sagun, Leon Bottou, and Yann LeCun.
\newblock Eigenvalues of the hessian in deep learning: Singularity and beyond.
\newblock {\em arXiv preprint arXiv:1611.07476}, 2016.

\bibitem{singh2000nash}
Satinder Singh, Michael Kearns, and Yishay Mansour.
\newblock Nash convergence of gradient dynamics in general-sum games.
\newblock In {\em Proceedings of the Sixteenth conference on Uncertainty in
  artificial intelligence}, pages 541--548. Morgan Kaufmann Publishers Inc.,
  2000.

\bibitem{sinha2018certifying}
Aman Sinha, Hongseok Namkoong, and John Duchi.
\newblock Certifying some distributional robustness with principled adversarial
  training.
\newblock 2018.

\bibitem{uzawa1958iterative}
Hirofumi Uzawa.
\newblock Iterative methods for concave programming.
\newblock {\em Studies in linear and nonlinear programming}, 6:154--165, 1958.

\bibitem{xu2017newton}
Peng Xu, Farbod Roosta-Khorasani, and Michael~W Mahoney.
\newblock Newton-type methods for non-convex optimization under inexact hessian
  information.
\newblock {\em arXiv preprint arXiv:1708.07164}, 2017.

\bibitem{xu2017first}
Yi~Xu and Tianbao Yang.
\newblock First-order stochastic algorithms for escaping from saddle points in
  almost linear time.
\newblock {\em arXiv preprint arXiv:1711.01944}, 2017.

\end{thebibliography}
\bibliographystyle{plain}
\clearpage
\onecolumn
\appendix

\begin{huge}\textbf{APPENDIX}\end{huge}
\section{Theoretical Analysis}
\subsection{Lemma \ref{lem:locally-optimal-saddle}}\label{proof:lem:locally-optimal-saddle}

\begin{lemma_custom_no}{\ref{lem:locally-optimal-saddle}}
Suppose that $f$ satisfies assumption~\ref{assum:degeneracy}; then,  $\z^* := (\pmin^*,\pmax^*)$ is a locally optimal saddle point on $\K_\gamma^*$ if and only if the gradient is zero, i.e. 
\begin{align} 
\nabla f(\pmin^*,\pmax^*) = 0,  
\end{align}
and the second derivative at $(\pmin^*,\pmax^*)$ is positive definite in $\pmin$ and negative definite in $\pmax$, i.e., there exist $\mu_\pmin,\mu_\pmax>0$ such that 
\begin{align} 
\nabla^2_{\pmin} f(\pmin^*,\pmax^*) \succ \mu_\pmin \I , \qquad \nabla^2_{\pmax} f(\pmin^*,\pmax^*) \prec  - \mu_\pmax \I.
\end{align} 
\end{lemma_custom_no}

\begin{proof}
From definition \ref{def:local_saddle_point} follows that a locally optimal saddle point $(\pmin^*, \pmax^*) \in \K_\gamma^*$ is a point for which the following two conditions hold:
\begin{align}
    f(\pmin^*, \pmax) \leq f(\pmin, \pmax) \quad \text{and} \quad f(\pmin, \pmax^*) \geq f(\pmin, \pmax) \quad \forall (\pmin,\pmax) \in \K_\gamma^*
\end{align}
Hence, $\pmin$ is a local minimizer of $f$ and $\pmax$ is a local maximizer. We therefore, without loss of generality, prove the statement of the lemma only for the minimizer $\pmin$, namely that
\begin{enumerate}[(i)]
    \item $\nabla_\pmin f(\pmin^*, \pmax) = 0 \quad \forall \pmax \text{ s.t. } \lVert \pmax - \pmax^* \rVert \leq \gamma$
    \item $\nabla^2_\pmin f(\pmin^*, \pmax) \succ \mu_\pmin \I \quad \forall \pmax \text{ s.t. } \lVert \pmax - \pmax^* \rVert \leq \gamma, \mu_\pmin > 0$.
\end{enumerate} 
The proof for the maximizer $\pmax$ directly follows from this.\\

\begin{enumerate}[(i)]
     \item If we assume that $\nabla_\pmin f(\pmin^*, \pmax) \neq 0$, then there exists a feasible direction $\mathbf{d} \in \R^k$ such that  $\nabla^\top_\pmin f(\pmin^*, \pmax) \mathbf{d} < 0$, and we can find a step size $\alpha > 0$ for $x(\alpha) = \pmin^* + \alpha \mathbf{d}$ s.t. $\alpha \|\mathbf{d}\| \leq \gamma$ with $\|\mathbf{d}\| = 1$.
    Using the smoothness assumptions (Assumption \ref{ass:smoothness}), we arrive at the following inequality
    \begin{multline}\label{equ:smoothness_inequ}
        \biggl| f(\pmin(\alpha), \pmax) - f(\pmin^*, \pmax) - \nabla^\top_\pmin f(\pmin^*, \pmax) (\pmin(\alpha) - \pmin^*) \\ - \frac{1}{2} (\pmin(\alpha) - \pmin^*)^\top \nabla^2_\pmin f(\pmin^*, \pmax) (\pmin(\alpha) - \pmin^*) \biggr|  \leq \frac{\rho_\pmin}{6} \| (\pmin(\alpha) - \pmin^*)\| 
    \end{multline}
    Hence, it holds that:
    \begin{align}
        f(\pmin(\alpha), \pmax) \leq f(\pmin^*, \pmax) + \alpha \left( \nabla^\top_\pmin f(\pmin^*, \pmax) \mathbf{d} + \frac{\rho_\pmin}{6} + \frac{1}{2} \alpha L_x \right)
    \end{align}
    By choosing the gradient descent direction $\mathbf{d} = -\beta \nabla^\top_\pmin f(\pmin^*, \pmax)$ (with $\beta > 0$ s.t. $\|\mathbf{d}\| = 1$), we can find a step size  $0 < \alpha <  \frac{2 \beta}{L_x} \|\nabla_\pmin f(\pmin^*, \pmax) \|^2 - \frac{\rho_\pmin}{3 L_x}$ such that $f(\pmin(\alpha), \pmax) < f(\pmin^*, \pmax)$,
    
    which contradicts that $f(\pmin^*, \pmax)$ is a local minimizer. Hence, $\nabla_\pmin f(\pmin^*, \pmax) = 0$ is a necessary condition for a local minimizer.
    \item To prove the second statement, we again make use of inequality \eqref{equ:smoothness_inequ} coming from the smoothness assumption and the update $\pmin(\alpha) = \pmin^* + \alpha \mathbf{d}$ s.t. $\alpha \|\mathbf{d}\| \leq \gamma$ with $\|\mathbf{d}\| = 1$. From (i) we know that $\nabla_\pmin f(\pmin^*, \pmax) = 0$ and, therefore, we obtain:
    \begin{align}
        &\left| f(\pmin(\alpha), \pmax) - f(\pmin^*, \pmax) - \frac{1}{2} \mathbf{d}^\top \nabla^2_\pmin f(\pmin^*, \pmax) \mathbf{d} \right|  \leq \frac{\rho_\pmin}{6} \alpha \\
        \Rightarrow & f(\pmin(\alpha), \pmax) \leq f(\pmin^*, \pmax) + \frac{1}{2} \mathbf{d}^\top \nabla^2_\pmin f(\pmin^*, \pmax) \mathbf{d} + \frac{\rho_\pmin}{6} \alpha
    \end{align}
    If $\nabla^2_\pmin f(\pmin^*, \pmax)$ is not positive semi-definite, then there exists at least one eigenvector $\mathbf{v}$ with negative curvature, i.e. $\mathbf{v}^\top \nabla^2_\pmin f(\pmin^*, \pmax)\mathbf{v} = -\epsilon < 0$. This implies that for $\alpha > \frac{1}{3\epsilon} \rho_\pmin$ following the curvature vector $\mathbf{v}$ decreases the function value, i.e., $f(\pmin(\alpha), \pmax) < f(\pmin^*, \pmax)$. This contradicts that $f(\pmin^*, \pmax)$ is a local minimizer which proves the sufficient condition 
    \begin{align}
        \nabla^2_\pmin f(\pmin^*, \pmax) \succ \mu_\pmin \I \quad \text{, with} \; \mu_\pmin > 0.
    \end{align}
\end{enumerate}
\end{proof}

\subsection{Lemma \ref{lemma:random_initialization}}
\label{proof:lemm:random_init}
The following Lemma \ref{lemma:diffeomorphism_gd} proves that the gradient-based mapping for the saddle point problem is a diffeomorphism which will be needed in the proof for Lemma \ref{lemma:random_initialization}.
\begin{lemma}\label{lemma:diffeomorphism_gd}
    Suppose that assumption \ref{ass:smoothness} holds; then the gradient mapping for the saddle point problem
    \begin{align}
        g(\pmin, \pmax) = (\pmin, \pmax) + \eta (-\nabla_{\pmin}f(\pmin, \pmax), \nabla_{\pmax} f(\pmin, \pmax)) 
    \end{align}
    with step size $\eta < \min\left( \frac{1}{L_x}, \frac{1}{L_y} , \frac{1}{\sqrt{2}L_z}\right)$ is a diffeomorphism.
\end{lemma}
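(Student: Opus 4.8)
The plan is to show the map $g\colon \R^{k+d}\to\R^{k+d}$ is (i) injective, (ii) a local diffeomorphism everywhere (i.e. its Jacobian is everywhere invertible), and (iii) surjective; together these give that $g$ is a $C^1$ bijection with $C^1$ inverse, hence a diffeomorphism. This is the standard argument used for gradient maps in minimization (e.g. Lee et al.), adapted to the block-sign-flipped vector field of the saddle dynamics. Since $f\in C^2$, $g$ is $C^1$, so the only work is the three bullet points.

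\emph{Injectivity.} Suppose $g(\z)=g(\widetilde\z)$. Then $\z-\widetilde\z = -\eta\big(-\nabla_\pmin f(\z)+\nabla_\pmin f(\widetilde\z),\ \nabla_\pmax f(\z)-\nabla_\pmax f(\widetilde\z)\big)$. I would split $\z-\widetilde\z=(\Delta\pmin,\Delta\pmax)$ and bound the two blocks separately: $\|\Delta\pmin\| = \eta\|\nabla_\pmin f(\z)-\nabla_\pmin f(\widetilde\z)\| \le \eta L_\pmin\|\z-\widetilde\z\|$ and likewise $\|\Delta\pmax\|\le \eta L_\pmax\|\z-\widetilde\z\|$ using the Lipschitz bounds from Assumption~\ref{ass:smoothness}. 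Hmm — but this only gives $\|\z-\widetilde\z\|^2 \le \eta^2(L_\pmin^2+L_\pmax^2)\|\z-\widetilde\z\|^2$, which needs $\eta^2(L_\pmin^2+L_\pmax^2)<1$; that is not quite what the hypothesis $\eta<\min(1/L_x,1/L_y,1/(\sqrt2 L_z))$ gives directly. The cleaner route is to use the \emph{joint} Lipschitz constant: write $h(\z):=g(\z)-\z = -\eta\,G(\z)$ where $G(\z)=(-\nabla_\pmin f,\nabla_\pmax f)$, observe $\|G(\z)-G(\widetilde\z)\| = \|\nabla f(\z)-\nabla f(\widetilde\z)\|\le L_\z\|\z-\widetilde\z\|$ (the block sign flip is an isometry, so it does not change the norm), and conclude $g$ is a Lipschitz perturbation of the identity with constant $\eta L_\z<1/\sqrt2<1$; hence $g$ is injective. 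Actually any of $\eta L_\z<1$ suffices for injectivity and surjectivity; the $\sqrt2$ will matter for the Jacobian.

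\emph{Local diffeomorphism / invertible Jacobian.} The Jacobian is $Dg(\z)=\I+\eta M(\z)$ where $M(\z)=\begin{bmatrix}-\nabla^2_\pmin f & -\nabla^2_{\pmin\pmax}f\\ \nabla^2_{\pmax\pmin}f & \nabla^2_\pmax f\end{bmatrix}$. I want $Dg(\z)$ invertible, i.e. $-1/\eta\notin\mathrm{spec}(M(\z))$. The matrix $M$ is not symmetric, so I would argue via $\|\eta M(\z)\|_2<1$: decompose $M = S + N$ where $S=\mathrm{diag}(-\nabla^2_\pmin f,\ \nabla^2_\pmax f)$ and $N=\begin{bmatrix}0 & -\nabla^2_{\pmin\pmax}f\\ \nabla^2_{\pmax\pmin}f & 0\end{bmatrix}$. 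Then $\|S\|_2 = \max(\|\nabla^2_\pmin f\|,\|\nabla^2_\pmax f\|)\le \max(L_\pmin,L_\pmax)\le L_\z$ and, since $N$ is the skew-symmetrized off-diagonal of the Hessian whose blocks have operator norm $\le\|\nabla^2 f\|\le L_\z$... this needs a little care: $\|N\|_2 = \|\nabla^2_{\pmin\pmax}f\|_2 \le L_\z$. Bounding $\|M\|_2\le\|S\|_2+\|N\|_2\le 2L_\z$ would only give invertibility for $\eta<1/(2L_\z)$, which is stronger than stated. The fix: note $M = J\,\nabla^2 f(\z)$ where $J=\mathrm{diag}(-\I_k,\I_d)$ is orthogonal, so $\|M\|_2=\|\nabla^2 f(\z)\|_2\le L_\z$; hence $\|\eta M(\z)\|_2\le \eta L_\z< 1$ and $Dg(\z)=\I+\eta M(\z)$ is invertible at every $\z$. (The $1/(\sqrt2 L_z)$ bound is presumably what is actually needed elsewhere, e.g. in Lemma~\ref{lemma:random_initialization} to control eigenvalues of $\I+\eta M$ inside the unit disk; for the diffeomorphism claim $\eta L_\z<1$ is enough, and the stated $\eta$ certainly implies it.)

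\emph{Surjectivity.} With $g=\mathrm{Id}+h$, $h$ Lipschitz with constant $\eta L_\z<1$: given any $\mathbf{w}$, the map $\z\mapsto \mathbf{w}-h(\z)$ is a contraction on $\R^{k+d}$, so by Banach's fixed point theorem it has a fixed point $\z$ with $g(\z)=\mathbf{w}$. Hence $g$ is a bijection; combined with the everywhere-invertible $C^1$ Jacobian, the inverse function theorem makes $g^{-1}$ locally $C^1$, hence $C^1$ globally, so $g$ is a diffeomorphism.

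\emph{Main obstacle.} The only delicate point is getting the operator-norm bound on the (nonsymmetric) Jacobian/vector-field to come out as $\eta L_\z$ rather than $2\eta L_\z$ — the key observation being that the saddle vector field is $J\nabla f$ with $J$ orthogonal and the Jacobian is $J\nabla^2 f$, so norms are preserved and the full-gradient Lipschitz constant $L_\z$ (not the sum of block constants) controls everything. Everything else is a routine contraction-mapping plus inverse-function-theorem argument; I would keep it brief.
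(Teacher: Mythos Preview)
Your argument is correct and in fact tighter than the paper's. The paper follows the Lee et al.\ template more literally: for injectivity it bounds each block $\|\nabla_\pmin f(\z)-\nabla_\pmin f(\widetilde\z)\|$ and $\|\nabla_\pmax f(\z)-\nabla_\pmax f(\widetilde\z)\|$ separately by $L_\z\|\z-\widetilde\z\|$ and then combines them, picking up the extra $\sqrt{2}$ (which is why $1/(\sqrt{2}L_\z)$ appears in the hypothesis); for surjectivity it does \emph{not} use a contraction argument but instead constructs an inverse coordinate-wise via proximal maps, showing that $h_1(\pmin)=\tfrac12\|\pmin-\widetilde\pmin\|^2-\eta f(\pmin,\pmax)$ is strongly convex when $\eta<1/L_\pmin$ (and the analogous statement in $\pmax$ when $\eta<1/L_\pmax$), which is where the constants $1/L_\pmin$ and $1/L_\pmax$ enter. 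Your route---recognizing that the saddle field is $J\nabla f$ with $J$ orthogonal, so the full Lipschitz constant $L_\z$ controls both the vector field and its Jacobian without any $\sqrt{2}$ loss, and then running a single Banach fixed-point argument for surjectivity---is more economical and shows that $\eta<1/L_\z$ alone suffices. The paper's block-wise proximal construction explains the form of the stated step-size bound but, as written, treats the $\pmin$- and $\pmax$-inversions as if they decouple, whereas your global contraction handles the coupling automatically.
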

\begin{proof}
The following proof is very much based on the proof of proposition 4.5 from \cite{lee2016gradient}.\\
A necessary condition for a diffeomorphism is bijectivity. Hence, we need to check that $g$ is (i) injective, and (ii) surjective for $\eta < \min\left( \frac{1}{L_x}, \frac{1}{L_y} , \frac{1}{\sqrt{2}L_z}\right)$.

\begin{enumerate}[(i)]
    \item Consider two points $\z := (\pmin, \pmax), \widetilde{\z}: = (\widetilde{\pmin}, \widetilde{\pmax}) \in \K_\gamma$ for which 
    \begin{align}
        g(\z) = g(\widetilde{\z}) 
    \end{align}
    holds. Then, we have that
    \begin{align}
        \z - \widetilde{\z} &= \eta \myvec{-\nabla_\pmin f(\z)}{\nabla_\pmax f(\z)} - \eta \myvec{-\nabla_\pmin f(\widetilde{\z})}{\nabla_\pmax f(\widetilde{\z})} \\
        &= \eta \myvec{-\nabla_\pmin f(\z) + \nabla_\pmin f(\widetilde{\z})}{\nabla_\pmax f(\z) - \nabla_\pmax f(\widetilde{\z})}.
    \end{align}
    Note that
    \begin{align}
        \| \nabla_\pmin f(\z) - \nabla_\pmin f(\widetilde{\z}) \| &\leq \| \nabla f(\z) - \nabla f(\widetilde{\z}) \| \leq L_z \| \z - \widetilde{\z} \| \\
        \| \nabla_\pmax f(\z) - \nabla_\pmax f(\widetilde{\z}) \| &\leq \| \nabla f(\z) - \nabla f(\widetilde{\z}) \| \leq L_z \| \z - \widetilde{\z} \|,
    \end{align}
    from which follows that
    \begin{align}
        \| \z - \widetilde{\z} \| &\leq \eta \sqrt{L_z^2 \| \z - \widetilde{\z} \|^2 + L_z^2 \| \z - \widetilde{\z} \|^2} \\
        &= \sqrt{2}\eta L_z \| \z - \widetilde{\z} \|.
    \end{align}
    For $0< \eta < \frac{1}{\sqrt{2}L_z}$ this means $\z = \widetilde{\z}$, and therefore $g$ is injective.
    
    \item We will show that $g$ is surjective by constructing an explicit inverse function for both optimization problems individually. As suggested by \cite{lee2016gradient}, we make use of the proximal point algorithm on the function $-f$ for the parameters $\pmin, \pmax$, individually. \\
    For the parameter $\pmin$ the proximal point mapping of $-f$ centered at $\widetilde{\pmin}$ is given by
    \begin{align}
        \pmin(\widetilde{\pmin}) = \arg \min_{\pmin} \underbrace{ \frac{1}{2}\| \pmin - \widetilde{\pmin} \|^2 - \eta f(\pmin, \pmax)}_{h(\pmin)}
    \end{align}
    Moreover, note that $h(\pmin)$ is strongly convex in $\pmin$ if $\eta < \frac{1}{L_x}$:
    \begin{align}
        (\nabla_\pmin h(\pmin) - \nabla_\pmin h(\widehat{\pmin}))^\top (\pmin - \widehat{\pmin}) = (\pmin - \eta \nabla_\pmin f(\pmin, \pmax) - \widehat{\pmin} + \eta \nabla_\pmin f(\widehat{\pmin}, \pmax))^\top (\pmin - \widehat{\pmin}) \\
        = \| \pmin - \widehat{\pmin} \|^2 - \eta (\nabla_\pmin f(\pmin, \pmax) - \nabla_\pmin f(\widehat{\pmin}, \pmax))^\top (\pmin - \widehat{\pmin}) \geq (1- \eta L_x) \| \pmin - \widehat{\pmin} \|^2
    \end{align}
    Hence, the function $h(\pmin)$ has a unique minimizer, given by
    \begin{align}
        &0 \stackrel{!}{=} \nabla_\pmin h(\pmin)  = \pmin - \widetilde{\pmin} - \eta \nabla_\pmin f(\pmin, \pmax) \\
        &\Rightarrow \widetilde{\pmin} = \pmin - \eta \nabla_\pmin f(\pmin, \pmax)
    \end{align}
    which means that there is a unique mapping from $\pmin$ to $\widetilde{\pmin}$ under the gradient mapping $g$ if $\eta < \frac{1}{L_x}$.\\
    The same line of reasoning can be applied to the parameter $\pmax$ with the negative proximal point mapping of $-f$ centered at $\widetilde{\pmax}$, i.e.
    \begin{align}
        \pmax(\widetilde{\pmax}) = \arg \max_{\pmax} \underbrace{ -\frac{1}{2}\| \pmax - \widetilde{\pmax} \|^2 - \eta f(\pmin, \pmax)}_{h(\pmax)}
    \end{align}
    Similarly as before, we can observe that $h(\pmax)$ is strictly concave for $\eta < \frac{1}{L_y}$ and that the unique minimizer of $h(\pmax)$ yields the $\pmax$ update step of $g$. This let's us conclude that the mapping $g$ is surjective for $(\pmin, \pmax)$ if $\eta < \min\left(\frac{1}{L_x}, \frac{1}{L_y}\right)$
\end{enumerate}
Observing that for $\eta < \frac{1}{L_\z}$, $g^{-1}$ is continuously differentiable concludes the proof that $g$ is a diffeomorphism. 
\end{proof}

\begin{lemma_custom_no}{\ref{lemma:random_initialization}}[Random Initialization]
Suppose that assumption~\ref{ass:smoothness} holds. Consider gradient iterates of Eq.~\eqref{eq:saddle_gradient} with step size $\eta < \min\left( \frac{1}{L_x}, \frac{1}{L_y} , \frac{1}{\sqrt{2}L_z}\right)$ starting from a random initial point. If the iterates converge to a stationary point, then the stationary point is almost surely stable. 
\end{lemma_custom_no}
\begin{proof}
    From lemma \ref{lemma:diffeomorphism_gd} follows that the gradient update from Eq.~\eqref{eq:saddle_gradient} for the saddle point problem is a diffeomorphism. The remaining part of the proof follows directly from theorem 4.1 from \cite{lee2016gradient}.
\end{proof}

\subsection{Lemma \ref{lem:stationary_points_are_optimal}}\label{proof:lem:stationary_points_are_optimal}
\begin{lemma_custom_no}{\ref{lem:stationary_points_are_optimal}}
    The point $\z:=(\pmin,\pmax)$ is a stationary point of the iterates in Eq.~\eqref{eq:cuvature_gradient_iterates} if and only if $\z$ is a locally optimal saddle point. 
\end{lemma_custom_no} 
\begin{proof}
    The point $\z^*$ is a stationary point of the iterates if and only if $\v_{\z^*} + \eta (-\nabla_{\pmin}f(\z^*), \nabla_{\pmax}f(\z^*)) = 0$. Let's consider w.l.o.g. only the stationary point condition with respect to $\pmin$, i.e.
    \begin{align}
        \v_{\z^*} &= \eta \nabla_\pmin f(\z^*)
    \end{align}
    We prove that the above equation holds only if $\nabla f(\z^*) = \v_{\z^*}= 0$. This can be proven by a simple contradiction; suppose that $\nabla f(\z^*) \neq 0$, then multiplying both sides of the above equation by $\nabla f(\z^*)$ yields
    \begin{align}
         \underbrace{\lambda_{\pmin^*}/(2\rho_{\pmin})}_{< 0} \underbrace{\text{sgn}(\v_{\pmin^*}^\top \nabla_{\pmin}f(\z^*)) \v_{\pmin^*}^\top \nabla_\pmin f(\z^*)}_{> 0} &= \eta \lVert \nabla_\pmin f(\z^*) \rVert^2
    \end{align}
     Since the left-hand side is negative and the right-hand side is positive, the above equation leads to a contradiction. Therefore, $\nabla f(\z^*) = 0$ and $\v_{\z^*}=0$. This means that $\lambda_{\pmin^*} \geq 0$ and $\lambda_{\pmax^*} \leq 0$ and therefore according to lemma \ref{lem:locally-optimal-saddle}, $\z^*$ is a locally optimal saddle point. 
\end{proof}   

\subsection{Lemma \ref{lem:sp_are_stable_cegd}}\label{proof:lem:sp_are_stable_cegd}
\begin{lemma_custom_no}{\ref{lem:sp_are_stable_cegd}}
Suppose that assumptions~\ref{ass:smoothness} and~\ref{assum:degeneracy} hold. Let $\z^* := (\pmin^*,\pmax^*)$ be a locally optimal saddle point, i.e.  
\begin{align} 
\nabla f(\z) = 0, \; \nabla^2_\pmin f(\z^*) \succeq \mu_\pmin \I, \;  \nabla^2_\pmax f(\z^*) \preceq - \mu_\pmax \I, \;  (\mu_\pmin, \mu_\pmax >0)
\end{align}
Then iterates of Eq. \eqref{eq:cuvature_gradient_iterates} are stable in $\K_\gamma^*$ as long as 
\begin{align} 
\gamma \leq \min \{ \mu_\pmin/(\sqrt{2} \rho_\pmin),\mu_\pmax/(\sqrt{2} \rho_\pmax) \} 
\end{align} 
\end{lemma_custom_no}
\begin{proof}
The proof is based on a simple idea: in a $\K_\gamma^*$ neighborhood of a locally optimal saddle point, $f$ can not have extreme curvatures, i.e., $\v_\z = \boldsymbol{0}$. Hence, within $\K_\gamma^*$ the update of Eq.~\eqref{eq:cuvature_gradient_iterates} reduces to the gradient update in Eq. \eqref{eq:saddle_gradient}, which is stable according to \cite{nagarajan2017gradient, mescheder2017numerics}.\\
To prove our claim that negative curvature doesn't exist in $\K_\gamma^*$, we make use of the smoothness assumption. Suppose that $\z := (\pmin,\pmax) \in \K_\gamma^*$, then the smoothness assumption~\ref{ass:smoothness} implies 
\begin{align} 
\nabla^2_\pmin f(\z) & =  \nabla^2_\pmin f(\z_*) - \left( \nabla^2_\pmin f(\z_*) - \nabla^2_\pmin f(\z) \right) \\ 
& \succeq \nabla^2_\pmin f(\z_*) - \rho_\pmin \| \z - \z_*\| \I  \\ 
& \succeq \nabla^2_\pmin f(\z_*) - \sqrt{2} \rho_\pmin \gamma \I \\ 
& \succeq (\mu_\pmin - \sqrt{2} \rho_\pmin \gamma) \I  \\ 
& \succ 0  \quad \quad [\gamma < \mu_\pmin/(\sqrt{2} \rho_\pmin)] 
\end{align}
Similarly, one can show that 
\begin{align} 
\nabla^2_\pmax f(\z) \prec 0 \quad \quad  [\gamma < \mu_\pmax/(\sqrt{2} \rho_\pmax)]. 
\end{align}
Therefore, the extreme curvature direction is zero according to the definition in Eq.~\eqref{eq:def:negative_curvature}.
\end{proof}

\subsection{Lemma ~\ref{lemma:escaping}}
\begin{lemma_custom_no} {\ref{lemma:escaping}}
Suppose that $\z^*:= (\pmin^*,\pmax^*)$ is an undesired stationary point of the gradient dynamics, namely
\begin{align} 
\nabla f(\z^*) = 0, \| \v_{\z^*} \| > 0. 
\end{align}
Consider the iterates of Eq.~\eqref{eq:cuvature_gradient_iterates} starting from $\z_0 = (\pmin_0,\pmax_0)$ in a $\gamma$-neighbourhood of $\z^*$. After one step the iterates escape the $\gamma$-neighbourhood of $\z^*$, i.e.
 \begin{align} 
  \| \z_1 - \z^* \| \geq \gamma
 \end{align} 
 for a sufficiently small $\gamma = \bigo( \| \v_{\z^*}\| )$. 
\end{lemma_custom_no}
\begin{proof} 
 \textbf{Preliminaries:} Consider compact notations 
 \begin{align} 
 \nabla_0 := (-\nabla_{\pmin} f(\z_0), \nabla_{\pmax} f(\z_0)), \v_0 := \v_{\z_0}, \v_* = \v_{\z^*} \\ 
 \lambda^{(-)} := \lambda_{\min} \left( \nabla^2_\pmin f (\z^*)\right) < 0 , \lambda^{(+)} := \lambda_{\max} \left( \nabla^2_\pmax f (\z^*)\right) > 0 \\ 
  \lambda^{(-)}_0 := \lambda_{\min} \left( \nabla^2_\pmin f (\z_0)\right) < 0, \lambda^{(+)}_0 := \lambda_{\max} \left( \nabla^2_\pmax f (\z_0)\right) > 0
 \end{align}
 \textbf{Characterizing extreme curvature: } The choice of $\v_0$ ensures that 
 \begin{align} \label{eq:curvature_gradient_product}
 \nabla_0^\top \v_0 > 0 
 \end{align} 
 holds. Since $\z_0$ lies in a $\gamma$-neighbourhood of $\z^*$, we can use the smoothness of $f$ to relate the negative curvature at $\z_0$ to negative curvature in $\z^*$: 
 \begin{align} 
 \nabla^2_\pmin f(\z_0) & \preceq \nabla^2_\pmin f(\z^*) +  \rho_{\pmin}  \| \z_0 - \z^* \| \I  \\ 
& \preceq \nabla^2_\pmin f(\z^*) + \sqrt{2} \rho_\pmin  \gamma \I.
 \end{align}
 Therefore 
 \begin{align} 
 \lambda^{(-)}_0 \leq \lambda^{(-)} + \sqrt{2} \rho_\pmin \gamma  
 \end{align} 
 Similarly, one can show that 
 \begin{align}
     \lambda_0^{(+)} \geq \lambda^{(+)} - \sqrt{2} \rho_\pmax \gamma 
 \end{align}
 Combining these two bounds yields 
 \begin{align} 
 \| \v_0 \| &  = \sqrt{(\lambda^{(-)}_0/(2\rho_\pmin))^2 + (\lambda^{(+)}_0/(2\rho_\pmax))^2} \\ 
 & \geq \frac{1}{4} | \lambda^{(-)}_0/\rho_\pmin | + \frac{1}{4} | \lambda^{(+)}_0/\rho_\pmax |  \\ 
 & \geq \frac{1}{4} \left( |\lambda^{(-)}/\rho_\pmin| + \lambda^{(+)}/\rho_\pmax - 2\sqrt{2} \gamma \right)
 \end{align} 
 To simplify the above bound, we use the compact notation $\lambda := 0.25(|\lambda^{(-)}/\rho_\pmin| + \lambda^{(+)}/\rho_\pmax)$: 
 \begin{align} \label{eq:curvature_lowerbound}
 \| \v_0 \| \geq \lambda - \frac{\sqrt{2}}{2} \gamma
 \end{align} 
\paragraph{Proof of escaping:} The squared norm of the update can be computed as 
 \begin{align} 
 \| \z_{1} - \z^* \|^2  & = \| \z_0 - \z^* + \eta \nabla_0 + \v_{0} \|^2  \\ 
 & = \| \z_0 - \z^* \|^2 + \| \eta \nabla_0 + \v_{0}  \|^2 + 2 (\z_0 - \z^*)^\top (\eta \nabla_0 + \v_0)  \\ 
 & \geq \| \eta \nabla_0 + \v_0 \| \left( \| \eta \nabla_0 + \v_0 \| - 2\sqrt{2} \gamma \right) \label{eq:lower-bound-distance-1}
 \end{align} 
 Now, we plug the results obtained from the smoothness assumption in the above inequality. First, we provide a lower-bound on the sum of gradients and the extreme curvature: 
 \begin{align} 
\| \eta \nabla_0 + \v_0 \| & = \left( \eta^2 \| \nabla_0 \|^2 + \| \v_0 \|^2 + 2 \v_0^\top \nabla_0 \right)^{1/2} \\ 
& \stackrel{\eqref{eq:curvature_gradient_product}}{\geq} \| \v_0 \| \\ 
& \stackrel{\eqref{eq:curvature_lowerbound}}{\geq} \lambda - \frac{\sqrt{2}}{2} \gamma
 \end{align}
Under the condition $\| \eta \nabla_0 + \v_0 \| - 2 \sqrt{2} \gamma>0$, we can use the above lower-bound for inequality~\eqref{eq:lower-bound-distance-1} yielding 
 \begin{align} \label{eq:distance-lower-bound-2}
   \| \z_{1} - \z^* \|^2 \geq  (\lambda - \frac{\sqrt{2}}{2} \gamma) (\lambda - \frac{5}{\sqrt{2}} \gamma)
 \end{align}
\textbf{Choice of $\boldsymbol{\gamma}$:}
To complete our inductive argument, we need to choose $\gamma$ such that the derived lower-bound of Eq.~\eqref{eq:distance-lower-bound-2} is greater than $\gamma^2$, i.e. 
\begin{align} 
& (\lambda - \frac{\sqrt{2}}{2} \gamma) (\lambda - \frac{5}{\sqrt{2}} \gamma) \geq \gamma^2 
\end{align} 
which holds for 
\begin{align} 
\gamma \leq \lambda (\sqrt{2} - \frac{2}{\sqrt{3}}) = \bigo\left( \| \v_{\z^*} \| \right).
\end{align}
Observing that for this choice for the bound of $\gamma$ it holds that
\begin{align}
    \| \eta \nabla_0 + \v_0 \| - 2 \sqrt{2} \gamma &\geq \lambda - \frac{5}{\sqrt{2}} \gamma \\
    & \geq \lambda - \frac{5}{\sqrt{2}} \lambda (\sqrt{2} - \frac{2}{\sqrt{3}})\\
    & = \lambda \frac{5 \sqrt{6} - 12}{3} > 0
\end{align}
concludes the proof.
\end{proof}

\subsection{Guaranteed decrease/increase}
The gradient update step of Eq.~\eqref{eq:saddle_gradient} has the property that an update with respect to $\pmin$ ($\pmax$) decreases (increases) the function value. The next lemma proves that \method{} shares the same desirable property in regions where extreme curvature exists. Note that in regions without extreme curvature, the \method{} method reduces to gradient based optimization and therefore inherits its theoretical properties.
\begin{lemma}\label{lemm:guaranteed_decrease}
In each iteration of  Eq.~\eqref{eq:cuvature_gradient_iterates}, $f$ decreases in $\pmin$ with 
\begin{align} 
f(\pmin_{t+1},\pmax_t) \leq f(\pmin_t,\pmax_t) - (\eta/2) \| \nabla_\pmin f(\z_t)\|^2 +  \lambda_{\pmin}^3/(24\rho_\pmin^2),
\end{align} 
and increases in $\pmax$ with 
\begin{align} 
f(\pmin_{t},\pmax_{t+1}) \geq f(\pmin_t,\pmax_t) + (\eta/2) \| \nabla_\pmax f(\z_t)\|^2 +  \lambda_{\pmax}^3/(24\rho_\pmax^2).
\end{align}
as long as the step size is chosen as 
\begin{align}
    \eta \leq \min \left\{\frac{\sqrt{9 L_\pmin^2\vphantom{L_\pmax^2} + 48 \rho_\pmin \ell_\pmin} - 3 L_\pmin}{8 \rho_\pmin \ell_\pmin}, \frac{\sqrt{9 L_\pmax^2  + 48 \rho_\pmax \ell_\pmax} - 3 L_\pmax}{8 \rho_\pmax \ell_\pmax} \right\} 
\end{align}
\end{lemma}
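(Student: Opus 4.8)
The plan is to establish the $\pmin$-decrease inequality and then invoke symmetry to obtain the $\pmax$-increase inequality. The key observation is that in one iteration of Eq.~\eqref{eq:cuvature_gradient_iterates}, the update to $\pmin$ from $\pmin_t$ to $\pmin_{t+1}$ is the sum of two contributions: the extreme-curvature step $\v_{\z_t}^{(-)}$ and the gradient step $-\eta \nabla_\pmin f(\z_t)$. I would treat both contributions inside a single Taylor expansion. Using Assumption~\ref{ass:smoothness} (in particular the Lipschitz continuity of $\nabla^2_\pmin f$ with constant $\rho_\pmin$), I would write the third-order Taylor bound
\begin{align*}
f(\pmin_{t+1},\pmax_t) \leq f(\pmin_t,\pmax_t) + \nabla_\pmin f(\z_t)^\top \mathbf{d} + \tfrac12 \mathbf{d}^\top \nabla^2_\pmin f(\z_t)\, \mathbf{d} + \tfrac{\rho_\pmin}{6}\|\mathbf{d}\|^3,
\end{align*}
where $\mathbf{d} := \pmin_{t+1}-\pmin_t = \v_{\z_t}^{(-)} - \eta \nabla_\pmin f(\z_t)$.

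Next I would expand each of the three leading terms using $\mathbf{d} = \v_{\z_t}^{(-)} - \eta \nabla_\pmin f(\z_t)$, and crucially exploit two facts built into the definition of $\v_{\z_t}^{(-)}$ in Eq.~\eqref{eq:def:negative_curvature}: first, $\v_{\z_t}^{(-)}$ is (a scalar multiple of) the eigenvector $\v_\pmin$ of $\nabla^2_\pmin f(\z_t)$ corresponding to the most negative eigenvalue $\lambda_\pmin$, so $\nabla^2_\pmin f(\z_t)\,\v_{\z_t}^{(-)} = \lambda_\pmin \v_{\z_t}^{(-)}$ and $\tfrac12 (\v_{\z_t}^{(-)})^\top \nabla^2_\pmin f(\z_t)\,\v_{\z_t}^{(-)} = \tfrac{\lambda_\pmin}{2}\|\v_{\z_t}^{(-)}\|^2 = \lambda_\pmin^3/(8\rho_\pmin^2)$; second, the sign choice $\mathrm{sgn}(\v_\pmin^\top \nabla_\pmin f(\z_t))$ guarantees $\v_{\z_t}^{(-)\,\top}\nabla_\pmin f(\z_t) \geq 0$, which means the cross term $-\eta\, \nabla_\pmin f(\z_t)^\top \v_{\z_t}^{(-)}$ coming from the first Taylor term is $\leq 0$ and can simply be dropped. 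The gradient term contributes $-\eta\|\nabla_\pmin f(\z_t)\|^2$ at first order and $+\tfrac{\eta^2}{2}\nabla_\pmin f^\top \nabla^2_\pmin f\, \nabla_\pmin f \leq \tfrac{\eta^2 L_\pmin}{2}\|\nabla_\pmin f\|^2$ at second order (using $\|\nabla^2_\pmin f\| \leq L_\pmin$). The remaining cross term at second order, $\eta \lambda_\pmin \v_{\z_t}^{(-)\,\top}\nabla_\pmin f(\z_t)$, is nonpositive since $\lambda_\pmin < 0$ and the inner product is nonnegative, so it too is dropped. Collecting everything and bounding $\|\mathbf{d}\|^3 \leq (\|\v_{\z_t}^{(-)}\| + \eta\|\nabla_\pmin f\|)^3$, expanded and bounded using $\|\nabla_\pmin f\| \leq \ell_\pmin$ and $\|\v_{\z_t}^{(-)}\| = |\lambda_\pmin|/(2\rho_\pmin)$.

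After collecting, the $\v_{\z_t}^{(-)}$ pieces should combine to $\lambda_\pmin^3/(8\rho_\pmin^2) + \rho_\pmin \|\v_{\z_t}^{(-)}\|^3/6 = \lambda_\pmin^3/(8\rho_\pmin^2) - |\lambda_\pmin|^3/(48\rho_\pmin^2) = \lambda_\pmin^3/(8\rho_\pmin^2) + \lambda_\pmin^3/(48\rho_\pmin^2)$... I would recompute carefully: since $\lambda_\pmin<0$, $\lambda_\pmin^3<0$ while $\|\v_{\z_t}^{(-)}\|^3 = |\lambda_\pmin|^3/(8\rho_\pmin^3) = -\lambda_\pmin^3/(8\rho_\pmin^3)$, so $\rho_\pmin\|\v_{\z_t}^{(-)}\|^3/6 = -\lambda_\pmin^3/(48\rho_\pmin^2)$, and $\lambda_\pmin^3/(8\rho_\pmin^2) - \lambda_\pmin^3/(48\rho_\pmin^2) \cdot(-1)$ — the bookkeeping here is exactly what produces the target constant $\lambda_\pmin^3/(24\rho_\pmin^2)$, so care with signs is essential. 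The gradient pieces give $(-\eta + \tfrac{\eta^2 L_\pmin}{2})\|\nabla_\pmin f\|^2$ plus the mixed terms of the cubic remainder (those containing both $\|\v_{\z_t}^{(-)}\|$ and $\eta\|\nabla_\pmin f\|$, bounded via $\ell_\pmin$). The stepsize constraint $\eta \leq (\sqrt{9L_\pmin^2 + 48\rho_\pmin\ell_\pmin} - 3L_\pmin)/(8\rho_\pmin\ell_\pmin)$ is precisely the threshold ensuring that $-\eta + \tfrac{\eta^2 L_\pmin}{2}$ together with the leftover cubic cross terms is dominated by $-\eta/2$; this amounts to solving a quadratic inequality in $\eta$, which is the source of that explicit bound.

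The main obstacle, and the part requiring genuine care rather than routine manipulation, is the sign/constant bookkeeping in combining the curvature contributions: the second-order term $\tfrac{\lambda_\pmin}{2}\|\v\|^2$ and the cubic remainder $\tfrac{\rho_\pmin}{6}\|\v\|^3$ have opposite-signed effects and must net out to exactly $\lambda_\pmin^3/(24\rho_\pmin^2)$ — getting the wrong coefficient would break the stated lemma. A secondary subtlety is justifying that all cross terms between the curvature step and the gradient step are nonpositive (or absorbable), which relies entirely on the sign convention in Eq.~\eqref{eq:def:negative_curvature}; I would state this explicitly. Finally, verifying the quadratic-in-$\eta$ inequality that yields the displayed stepsize bound, and checking the analogous computation for $\pmax$ (where everything flips sign because we ascend and use the largest eigenvalue $\lambda_\pmax > 0$ of $\nabla^2_\pmax f$), completes the argument; the $\pmax$ case is formally identical after replacing $f$ by $-f$ in the $\pmax$ block, so I would not rewrite it in full.
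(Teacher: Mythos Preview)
Your approach is essentially the paper's: third-order Taylor expansion in $\pmin$, split $\mathbf{d}$ into the curvature and gradient parts, drop nonpositive cross terms, then solve a quadratic in $\eta$. Two corrections are needed, however.

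First, the sign of the inner product is reversed. From Eq.~\eqref{eq:def:negative_curvature}, $\v_\z^{(-)} = \tfrac{\lambda_\pmin}{2\rho_\pmin}\,\mathrm{sgn}(\v_\pmin^\top\nabla_\pmin f)\,\v_\pmin$ with $\lambda_\pmin < 0$, so
\[
(\v_\z^{(-)})^\top\nabla_\pmin f \;=\; \tfrac{\lambda_\pmin}{2\rho_\pmin}\,\bigl|\v_\pmin^\top\nabla_\pmin f\bigr| \;\leq\; 0,
\]
not $\geq 0$. The first-order cross term in the Taylor expansion is $\nabla_\pmin f^\top \v_\z^{(-)}$ (with no factor $-\eta$), which is $\leq 0$ and can be dropped; the second-order cross term is $-\eta\lambda_\pmin (\v_\z^{(-)})^\top\nabla_\pmin f$, again $\leq 0$. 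Your conclusions about which terms are discardable happen to be right, but only because two sign errors cancel --- this is exactly the ``bookkeeping'' you flag as delicate, so it should be stated correctly.

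Second, to recover the stated step-size constant, the paper does not fully expand $(\|\v_\z^{(-)}\| + \eta\|\nabla_\pmin f\|)^3$ but uses the cruder inequality $(a+b)^3 \leq 4a^3 + 4b^3$, giving $\|\mathbf{d}\|^3 \leq 4\eta^3\|\nabla_\pmin f\|^3 + 4\|\v_\z^{(-)}\|^3$. The $\tfrac{\rho_\pmin}{6}\cdot 4\|\v_\z^{(-)}\|^3$ piece then combines with $\tfrac{\lambda_\pmin}{2}\|\v_\z^{(-)}\|^2 = \lambda_\pmin^3/(8\rho_\pmin^2)$ to give exactly $\lambda_\pmin^3/(24\rho_\pmin^2)$, while the $4\eta^3\|\nabla_\pmin f\|^3$ piece is bounded by $4\eta^3\ell_\pmin\|\nabla_\pmin f\|^2$; the resulting quadratic condition $\tfrac{2}{3}\rho_\pmin\ell_\pmin\eta^2 + \tfrac{L_\pmin}{2}\eta - \tfrac{1}{2} \leq 0$ produces precisely the displayed bound on $\eta$. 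A full trinomial expansion would leave extra mixed terms and a different constant.
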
 

\begin{proof}
    The Lipschitzness of the Hessian (Assumption \ref{assum:Lipschitzness}) implies that for $\Delta \in \R^d$
    \begin{align}\label{eq:lipschitz_hessian_bound}
        | f(\pmin+\Delta, \pmax) - f(\pmin, \pmax) - \Delta^\top \nabla_{\pmin}f(\pmin, \pmax) - \frac{1}{2} \Delta^\top \nabla_{\pmin}^2f(\pmin, \pmax) \Delta | \leq \frac{\rho_x}{6} \| \Delta \|^3
    \end{align}
    holds. The update in Eq.~\eqref{eq:cuvature_gradient_iterates} for $\pmin$ is given by $\Delta = \alpha \v - \eta \nabla_{\pmin} f(\pmin, \pmax)$, where $\alpha = -\lambda/(2\rho_\pmin)$ (where we assume w.l.o.g. that $\v^\top \nabla_{\pmin} f(\pmin, \pmax) < 0$) and $\v$ is the eigenvector associated with the minimum eigenvalue $\lambda$ of the Hessian matrix $\nabla_\pmin^2 f(\pmin_t,\pmax_t)$. In the following we use the shorter notation: $\nabla f := \nabla_{\pmin} f(\pmin, \pmax)$, and $\H := \nabla_\pmin^2 f(\pmin,\pmax)$. We can construct a lower bound on the left hand side of Eq. \eqref{eq:lipschitz_hessian_bound} as
    \begin{align}
        | &f(\pmin+\Delta, \pmax) - f(\pmin, \pmax) - \Delta^\top \nabla f - \frac{1}{2} \Delta^\top \H \Delta | \\
        &\geq  f(\pmin+\Delta, \pmax) - f(\pmin, \pmax) - \Delta^\top \nabla f - \frac{1}{2} \Delta^\top \H \Delta \\
        &\geq f(\pmin+\Delta, \pmax) - f(\pmin, \pmax) - \alpha \v^\top \nabla f + (\eta - \frac{\eta^2}{2}L_x) \| \nabla f \|^2 - \frac{1}{2} \alpha^2 \lambda + \alpha \eta \lambda \v^\top \nabla f
    \end{align}
    which leads to the following inequality
    \begin{align}\label{eq:function_decrease_in_pmin_bound}
        f(\pmin+\Delta, \pmax) - f(\pmin, \pmax) &\leq \alpha \v^\top \nabla f - (\eta - \frac{\eta^2}{2}L_x) \| \nabla f \|^2 + \frac{1}{2} \alpha^2 \lambda - \alpha \eta \lambda \v^\top \nabla f + \frac{\rho_x}{6} \| \Delta \|^3 \\
        & \leq \frac{1}{2} \alpha^2 \lambda - (\eta - \frac{\eta^2}{2} L_x) \| \nabla f \|^2 + \frac{\rho_x}{6} \| \Delta \|^3
    \end{align}
    
    By using the triangular inequality we obtain the following bound on the cubic term
    \begin{align} 
    \| \Delta \|^3 & \leq (\eta \| \nabla f \| + \alpha \| \v\|)^3 \\ 
    & \leq 4 \eta^3 \| \nabla f \|^3 + 4 \alpha^3 \| \v\|^3 = 4 \eta^3 \| \nabla f \|^3 + 4 \alpha^3.
    \end{align}
    Replacing this bound into the upper bound of Eq.~\eqref{eq:function_decrease_in_pmin_bound} yields 
    \begin{align}
         f(\pmin+\Delta, \pmax) - f(\pmin, \pmax)  \leq \frac{1}{2} \alpha^2 \lambda - (\eta - \frac{\eta^2}{2} L_x) \| \nabla f \|^2 + \frac{\rho_x}{6} \left( 4 \eta^3 \| \nabla f \|^3 + 4 \alpha^3  \right) 
    \end{align}
    The choice of $\alpha = -\lambda/(2\rho_\pmin)$ leads to further simplification of the above bound: 
    \begin{align} 
      f(\pmin+\Delta, \pmax) - f(\pmin, \pmax)  \leq \frac{\lambda^3}{24 \rho_{\pmin}^2} - \eta (1 - \frac{\eta}{2}L_x - \frac{2}{3} \rho_\pmin \eta^2 \ell_\x) \| \nabla f \|^2 
    \end{align} 
    Now, we choose step size $\eta$ such that 
   \begin{align} 
   1- \frac{\eta}{2}L_x - \frac{2}{3} \rho_\pmin \eta^2 \ell_\pmin \geq 1/2
   \end{align}
    For $\eta \leq \frac{\sqrt{9 L_\pmin^2 + 48 \rho_\pmin \ell_\pmin} - 3 L_\pmin}{8 \rho_\pmin \ell_\pmin}$, the above inequality holds. Therefore, the following decrease in the function value is guaranteed  
   \begin{align} 
    f(\pmin+\Delta, \pmax) - f(\pmin, \pmax)  \leq \lambda^3/(24\rho_\pmin^2) - (\eta/2) \| \nabla f \|^2
   \end{align} 
   Similarly, one can derive the lower-bound for the function increase in $\pmax$. 
\end{proof}
Within a region of extreme curvature, this lemma guarantees a larger decrease (increase) in $\x$ (in  $\y$) compared to gradient descent (ascent). However, we are not claiming that these decrements accelerate the global convergence. 

\subsection{Lemma \ref{lem:linear_transformed_guarantee}}
\begin{lemma_custom_no}{\ref{lem:linear_transformed_guarantee}}
    The set of locally optimal saddle points as defined in Def. \ref{def:local_saddle_point} and the set of stable points of the linear-transformed CESP update method in Eq. \eqref{eq:curvatuer_linear_transformated_gradient} are the same.
\end{lemma_custom_no}
\begin{proof}
    As a direct consequence of lemma \ref{lem:stationary_points_are_optimal} and the positive definiteness property of the linear transformation matrix follows that a locally optimal saddle point is a stationary point of the linear-transformed updates.
    
    In the following, we prove stability of locally optimal saddles. Let's consider a locally optimal saddle point $\z^* := (\pmin^*,\pmax^*)$ in the sense of Def. \ref{def:local_saddle_point}. From lemma \ref{lem:locally-optimal-saddle} follows that
\begin{align} \label{equ:pd_nd_assumption}
\nabla^2_{\pmin} f(\pmin^*,\pmax^*) \succeq \mu_\pmin \I ,\qquad \nabla^2_{\pmax} f(\pmin^*,\pmax^*) \preceq  - \mu_\pmax \I.
\end{align} 
for $\mu_\pmin, \mu_\pmax > 0$. As a direct consequence the extreme curvature direction is zero, i.e. $\v_\z = \boldsymbol{0}$. Hence, the Jacobian of the update in Eq.~\eqref{eq:curvatuer_linear_transformated_gradient} is given by
\begin{align}
    \I + \eta \A_{\z^*} \H(\z^*).
\end{align}
where $\H(\z^*)$ is the partial derivative of $(-\nabla_\pmin f(\z^*), \nabla_\pmax f(\z^*))$. The point $\z^*$ is a stable point of the dynamic if all eigenvalues of its Jacobian lie within the unit disk. This condition can be fulfilled, with a sufficiently small step size, if and only if all the real parts of the eigenvalues of $\A_{\z^*} \H(\z^*)$ are negative. \\
Hence, to prove stability of the update for a locally optimal saddle point $\z^*$, we have to show that the following expression is a Hurwitz matrix~\cite{khalil2002nonlinear}:
\begin{align}
     \J(\z^*) = \underbrace{\A_{\z}}_{:=\A} \underbrace{\mymatrix{-\nabla_{\pmin}^2f(\z^*)}{-\nabla_{\pmin,\pmax}f(\z^*)}{\nabla_{\pmax,\pmin}f(\z^*)}{\nabla^2_{\pmax} f(\z^*)}}_{=\H} := \J
\end{align}
Since $\A$ is a symmetric, positive definite matrix, we can construct its square root $\A^\frac{1}{2}$ such that $\A = \A^\frac{1}{2}\A^\frac{1}{2}$. The matrix product $\A \H$ can be re-written as
\begin{align}
    \A \H = \A^\frac{1}{2} (\A^\frac{1}{2} \H \A^\frac{1}{2}) \A^{-\frac{1}{2}}.
\end{align}
Since we are multiplying the matrix $\Tilde{\J} = \A^\frac{1}{2} \H \A^\frac{1}{2}$ from the left with the inverse of the matrix from which we are multiplying from the right side, we can observe that $\J$ has the same eigenvalues as $\Tilde{\J}$.
The symmetric part of $\Tilde{\J}(\z^*)$ is given by
\begin{align}
    \frac{1}{2}\left(\Tilde{\J} + \Tilde{\J}^\top\right) &= \frac{1}{2}(\A^\frac{1}{2} \H \A^\frac{1}{2} + \A^\frac{1}{2} \H^\top \A^\frac{1}{2}) = \A^\frac{1}{2} (\H + \H^\top) \A^\frac{1}{2} \\
    &= \A^\frac{1}{2} \mymatrix{-\nabla_{\pmin}^2f(\z^*)}{0}{0}{\nabla^2_{\pmax} f(\z^*)}\A^\frac{1}{2}
\end{align}

From the assumption in Eq. \eqref{equ:pd_nd_assumption} follows that the block diagonal matrix $(\H + \H^\top)$ is a symmetric, negative definite matrix, for which it therefore holds that $x^\top(\Tilde{\J} + \Tilde{\J}^\top)x \leq 0$ for any $x \in \mathrm{R}^{k+d}$. The remaining part of the proof follows the argument from \cite{Benzi05numericalsolution} Theorem 3.6.\\

Let ($\lambda, v$) be an eigenpair of $\Tilde{\J}$. Then, the following two equalities hold:
\begin{align}
    v^* \Tilde{\J} v &= \lambda \\
    (v^* \Tilde{\J} v)^* &= v^* \Tilde{\J}^\top v = \bar{\lambda}
\end{align}
Therefore, we can re-write the real part of the eigenvalue $\lambda$ as:
\begin{align}
    \text{Re}(\lambda) = \frac{\lambda+\bar{\lambda}}{2} = \frac{1}{2} v^* (\Tilde{\J} + \Tilde{\J}^\top) v.
\end{align}
By observing that 
\begin{align}
    v^* (\Tilde{\J} + \Tilde{\J}^\top) v = \text{Re}(v)^\top (\Tilde{\J} + \Tilde{\J}^\top) \text{Re}(v) + \text{Im}(v)^\top (\Tilde{\J} + \Tilde{\J}^\top) \text{Im}(v)
\end{align}
is a real, negative quantity, we can be sure that the real part of any eigenvalue of $J$ is negative. Therefore it directly follows that, with a sufficiently small step size $\eta > 0$, any locally optimal saddle point $\z^*$ is a stable stationary point of the linear-transformed update method in Eq.~\eqref{eq:curvatuer_linear_transformated_gradient}.
\end{proof}

\section{Transformed Gradient Updates}\label{sec:transformed_updates}
Table \ref{tab:upd-matrices} shows the update matrices for commonly used optimization methods on the saddle point problem.
\begin{table}[h]
\centering
\caption{Update matrices of the different optimization schemes.}
\label{tab:upd-matrices}
\begin{tabular}{|l | c|c|} \toprule
                                                      & Formula & positive definite? \\ \midrule
\multirow{2}{*}{Gradient Descent}                     & $\mathcal{A}_{t} =  I$         &   \multirow{2}{*}{Yes.} \\
                                                      & $\mathcal{B}_{t} =  I$        &                    \\ \hline
\multirow{2}{*}{Adagrad \cite{Duchi:EECS-2010-24}}                            & $\mathcal{A}_{t, ii} = \left(\sqrt{\sum_{\tau = 1}^t \left( \nabla_{\pmin_{i}} f(\pmin_{\tau}, \pmax_{\tau})\right)^2 + \epsilon}\right)^{-1}$   & \multirow{2}{*}{Yes.}\\ 
                                                      &  $\mathcal{B}_{t, ii} =  \left(\sqrt{\sum_{\tau = 1}^t \left( \nabla_{\pmax_{i}} f(\pmin_{\tau}, \pmax_{\tau})\right)^2 + \epsilon}\right)^{-1}$       &                 \\\hline   
\multirow{2}{*}{Saddle-Free Newton \cite{NIPS2014_5486}} &  $\mathcal{A}_t =  \left| \nabla^2_{\pmin} f(\pmin_{t}, \pmax_{t}) \right|^{-1}$       &  \multirow{2}{*}{Yes.}\\
                                  &   $\mathcal{B}_t =  \left| \nabla^2_{\pmax}  f(\pmin_{t}, \pmax_{t}) \right|^{-1}$       &                      \\ \bottomrule
\end{tabular}
\end{table}

\section{Experiments} \label{sect:experiments_appendix}

\subsection{Toy Example}
Figure \ref{fig:convconv_example_basin_attraction} shows the basin of attraction for {\sc GD} and \method{} on the toy saddle point problem of Eq. \eqref{eq:conv_conv_function}.
\begin{figure*}[h]
    \begin{subfigure}[t]{0.48\textwidth}
        \centering
        \includegraphics[width=1\textwidth]{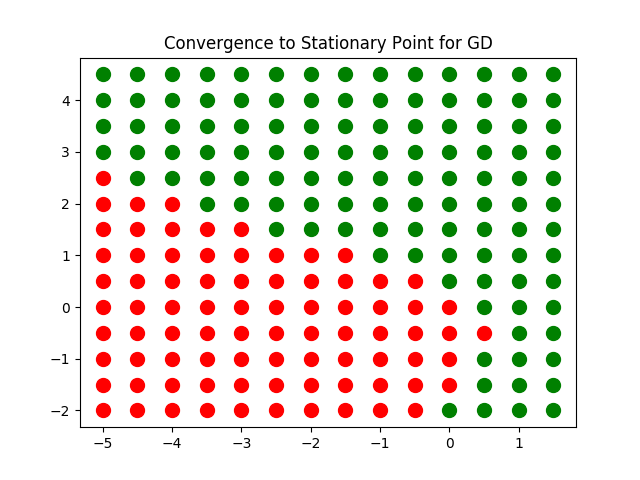}
        \caption{}
    \end{subfigure}%
    ~ 
    \begin{subfigure}[t]{.48\textwidth}
        \centering
        \includegraphics[width=1\textwidth]{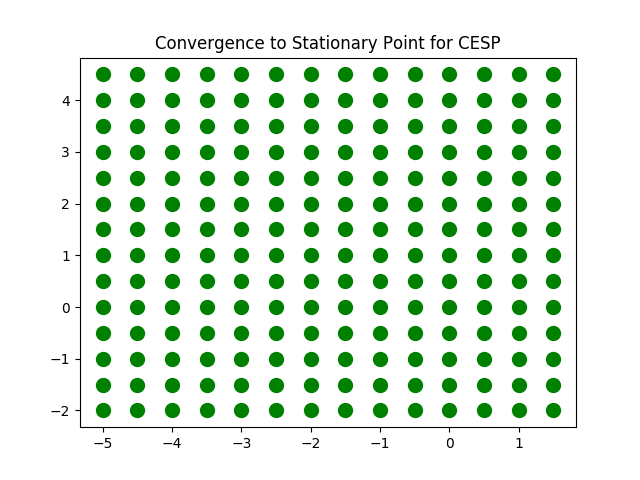}
        \caption{}
    \end{subfigure}
    \caption{Comparison of the basin of attraction for GD and \method{} to the locally optimal saddle point (green area) and the undesired critical point (red area).
    }
    \label{fig:convconv_example_basin_attraction}
\end{figure*}

\subsection{Generative Adversarial Networks} \label{sec:gan_experiments_appendix}
\subsubsection{Single-layer GAN}

\begin{table}[h]
\centering
\caption{Parameters of the single-layer GAN model.}
\label{tab:model_params}
\begin{tabular}{l|r|r}
                     & Discriminator & Generator \\ \hline \hline
Input Dimension      & 784                               & 10                            \\\hline
Hidden Layers        & 1                                 & 1                             \\\hline
Hidden Units / Layer & 100                                & 100                            \\\hline
Activation Function  & Leaky ReLU                              & Leaky ReLU                          \\\hline
Output Dimension     & 1                                 & 784                           \\\hline
Batch Size           & \multicolumn{2}{c}{1000}                                          \\\hline
Learning Rate $\eta$        & \multicolumn{2}{c}{0.01}\\\hline
Learning Rate $\alpha := \frac{1}{2\rho_{\pmin}} = \frac{1}{2\rho_{\pmax}}$        & \multicolumn{2}{c}{0.05} 
\\
\end{tabular}
\end{table}

\paragraph{Using two individual loss functions}
It is common practice in GAN training to not consider the saddle point problem as defined in Eq. \eqref{eq:GAN_objective}, but rather split the training into two individual optimization problems over different functions. In particular, one usually considers
\begin{align}
    \min_{\pmin} ( f_1(\pmin, \pmax) &= - \mathbb{E}_{\z \sim p_z} \log D_\pmin (G_\pmax (\z)) ) \\
    \max_\pmax (f_2(\pmin, \pmax) &= \mathbb{E}_{\theta \sim p_d} \log D_{\pmin}(\theta) + \mathbb{E}_{\mathbf{z} \sim p_z} \log(1 - D_{\pmin}(G_{\pmax}(\mathbf{z}))) )
\end{align}
Our \method{} optimization method is defined individually for the two parameter sets $\pmin$ and $\pmax$ and can therefore also be applied on such a setting with two individual objectives. Figure \ref{fig:GAN_exp_two_loss} shows the results on the single-layer GAN problem, trained with two individual losses. In this experiment, \method{} decreases the negative curvature of $\nabla^2_\pmin f_1$, while the gradient method can not exploit the negative curvature appropriately (the value of the smallest eigenvalue is oscillating in the negative area).

\begin{figure*}
    \centering
    \begin{subfigure}[t]{0.33\textwidth}
        \centering
        \includegraphics[width=1\textwidth]{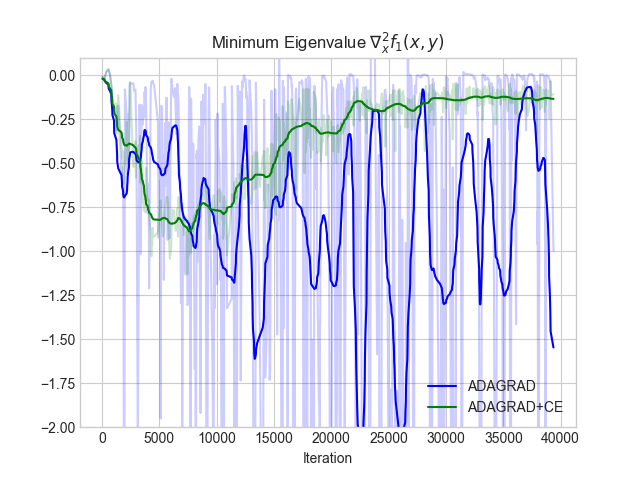}
    \end{subfigure}%
    ~ 
    \begin{subfigure}[t]{.33\textwidth}
        \centering
        \includegraphics[width=1\textwidth]{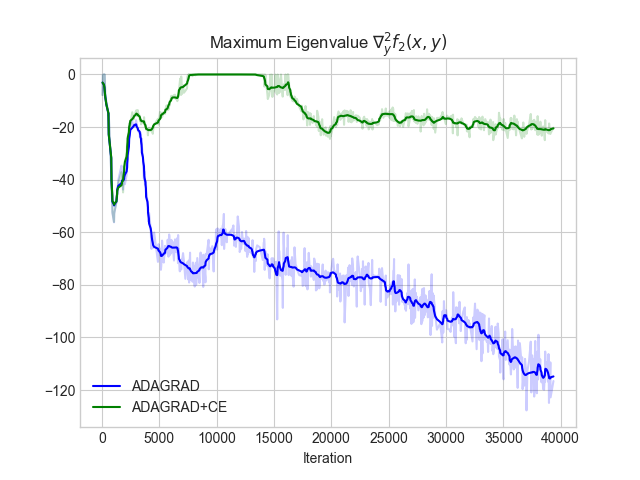}
    \end{subfigure}%
    ~ 
    \begin{subfigure}[t]{.33\textwidth}
        \centering
        \includegraphics[width=1\textwidth]{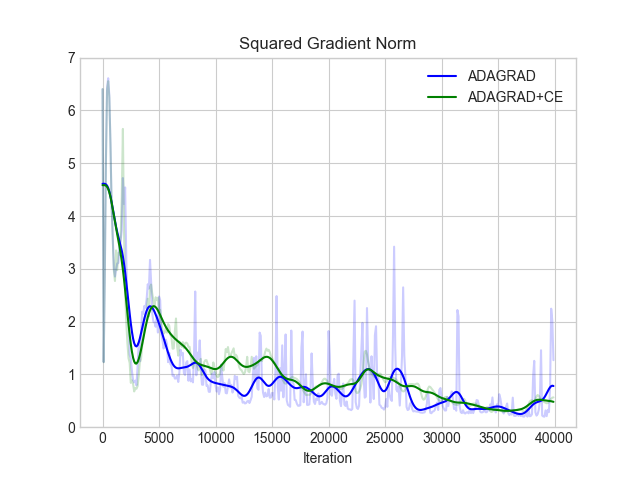}
    \end{subfigure}
    \caption{Results of the single-layer GAN with individual loss functions on MNIST data. The first two plots show the minimum eigenvalue of $\nabla_{\pmin}^2 f_1(\pmin, \pmax)$ and the maximum eigenvalue of $\nabla_{\pmax}^2 f_2(\pmin, \pmax)$, respectively. The third plot shows $\| \nabla f(\z_t)\|^2$. The transparent graph shows the original values, whereas the solid graph is smoothed with a Gaussian filter.}
    \label{fig:GAN_exp_two_loss}
\end{figure*}

\end{document}